\newtheorem{theorem}{Theorem}
\newtheorem{lemma}{Lemma}
\newtheorem{proposition}{Proposition}
\newtheorem{corollary}{Corollary}
\theoremstyle{definition}
\newtheorem{remark}{Remark}
\newcommand{\bx}{\mathbf{x}}
\newcommand{\by}{\mathbf{y}}
\newcommand{\bz}{\mathbf{z}}
\newcommand{\be}{\mathbf{e}}
\newcommand{\bxout}{{\mathbf{x}_{\mathrm{out}}}}
\newcommand{\xout}{{x_{\mathrm{out}}}}
\newcommand{\bg}{\mathbf{g}}
\renewcommand{\be}{\mathbf{e}}
\newcommand{\bu}{\mathbf{u}}
\newcommand{\bn}{\mathbf{n}}
\newcommand{\bxi}{\boldsymbol{\xi}}
\newcommand{\Ncal}{\mathcal{N}}
\newcommand{\balpha}{\boldsymbol{\alpha}}
\newcommand{\bgamma}{\boldsymbol{\gamma}}
\newcommand{\WTC}{W_\mathcal{T}^C}
\newcommand{\real}{\mathbb{R}}
\newcommand{\reals}{\mathbb{R}}
\newcommand{\E}{\mathbb{E}}
\newcommand{\Var}{\mathbb{V}}
\newcommand{\Ocal}{\mathcal{O}}
\newcommand{\norm}[1]{\|#1\|}
\newcommand{\secref}[1]{Sec.~\ref{#1}}
\newcommand{\subsecref}[1]{Subsection~\ref{#1}}
\newcommand{\figref}[1]{Fig.~\ref{#1}}
\newcommand{\lemref}[1]{Lemma~\ref{#1}}
\newcommand{\thmref}[1]{Thm.~\ref{#1}}
\newcommand{\propref}[1]{Proposition~\ref{#1}}
\newcommand{\appref}[1]{Appendix~\ref{#1}}
\title{The Complexity of Finding Stationary Points\\ with Stochastic Gradient Descent}
\author{Yoel Drori\\Google Research \and Ohad Shamir\\Weizmann Institute of Science\\ and Google Research}
\begin{document}
\maketitle

\begin{abstract}

We study the iteration complexity of stochastic gradient descent (SGD) for minimizing the gradient norm of smooth, possibly nonconvex functions. We provide several results, implying that the $\mathcal{O}(\epsilon^{-4})$ upper bound of Ghadimi and Lan~\cite{ghadimi2013stochastic} (for making the average gradient norm less than $\epsilon$) cannot be improved upon, unless a combination of additional assumptions is made. Notably, this holds even if we limit ourselves to convex quadratic functions. We also show that for nonconvex functions, the feasibility of minimizing gradients with SGD is surprisingly sensitive to the choice of optimality criteria.

\end{abstract}

\section{Introduction}

Stochastic gradient descent (SGD) is today one of the main workhorses for solving large-scale supervised learning and optimization problems. Much of its popularity is due to its extreme simplicity: Given a function $f$ and an initialization point $\bx$, we perform iterations of the form 
$\bx_{t+1}=\bx_t-\eta_t \bg_t$, where $\eta_t>0$ is a step-size parameter and $\bg_t$ is a stochastic vector which satisfies $\E[\bg_t|\bx_t]=\nabla f(\bx_t)$. For example, in the context of machine learning, $f(\bx)$ might be the expected loss of some predictor parameterized by $\bx$ (over some underlying data distribution) and $\bg_t$ is the gradient of the loss w.r.t. a single data sample. For convex problems, the convergence rate of SGD to a global minimum of $f$ has been very well studied (for example, \cite{kushner2003stochastic,nemirovski2009robust,moulines2011non,bertsekas2011incremental,rakhlin2012making,bottou2018optimization}), however, for nonconvex problems, convergence to a global minimum cannot in general be guaranteed. A reasonable substitute is to study the convergence to local minima, or at the very least, to stationary points. This can also be quantified as an optimization problem where the goal is not to minimize $f(\bx)$ over $\bx$, but rather $\norm{\nabla f(\bx)}$. This question of finding stationary points has gained more attention in recent years, with the rise of deep learning and other large-scale nonconvex optimization methods.

Compared to optimizing function values, the convergence of SGD in terms of minimizing the gradient norm is relatively less well-understood. 
A result by Ghadimi and Lan~\cite{ghadimi2013stochastic}, which we repeat in \appref{sec:upperbounds} for completeness,
states that for smooth (Lipschitz gradient) functions, $\Ocal(\epsilon^{-4})$ iterations are sufficient to make the average expected gradient $\E[\frac{1}{T}\sum_{t=1}^{T}\norm{\nabla f(\bx_t)}]$ 
less than $\epsilon$, and it was widely conjectured that this is the best complexity achievable with SGD. However, this bound was recently improved in Fang et al.\ \cite{fang2019sharp}, which showed a complexity bound of $\Ocal(\epsilon^{-3.5})$ for SGD, under the following additional assumptions/algorithmic modifications:
\begin{enumerate}
    \item \label{as:aggregation} \textbf{(Complex) aggregation.} Rather than considering the average or minimal gradient norm of the iterates, the algorithm considers the norm of a certain adaptive average of a suffix of the iterates (those which do not deviate too much from the final iterate). 
    \item \label{as:hessian} \textbf{Lipschitz Hessian.} The function is twice differentiable, with a Lipschitz Hessian as well as a Lipschitz gradient.
    \item \label{as:noise} \textbf{``Dispersive'' noise.} The stochastic noise satisfies a ``dispersive'' property, which intuitively implies that it is well-spread (it is satisfied, for example, for Gaussian or uniform noise in some ball). 
    \item \label{as:dimension} \textbf{Bounded dimension.} The dimension is bounded, in the sense that there is an explicit logarithmic dependence on it in the iteration complexity bound (in contrast, the 
    Ghadimi and Lan $\Ocal(\epsilon^{-4})$ result is dimension-free).
\end{enumerate}

The result of Fang et al.\ is even stronger, as it shows convergence to a \emph{second-order} stationary point (where the Hessian is nearly positive definite), however, this will not be our focus here. Note that in this setting it is known that some dimension dependence is difficult to avoid (see~\cite{simchowitz2017gap})



In this paper, we study the performance limits of SGD for minimizing gradients through several variants of lower bounds under different assumptions. In particular, we wish to understand which of the assumptions/modifications above are necessary to break the $\epsilon^{-4}$ barrier. 
Our main take-home message is that most of these appear to be needed in order to attain an iteration complexity better than $\Ocal(\epsilon^{-4})$, in some cases even if we limit ourselves just to convex quadratic functions. In a bit more detail: \begin{itemize}
    \item If we drop Assumption~\ref{as:dimension} (bounded dimension), and consider the norm of the gradient at the output of some fixed, deterministic aggregation scheme (as opposed to returning, for example, an iterate with a minimal gradient norm), then perhaps surprisingly, we show that it is impossible to provide \emph{any} finite complexity bound. This holds under mild algorithmic conditions, which extend far beyond SGD. This implies that for dimension-free bounds, we must either consider rather complicated aggregation schemes, apply randomization, or use optimality criteria which do not depend on a single point (e.g., consider the average gradient  $\frac{1}{T}\sum_{t=1}^{T}\norm{\nabla f(\bx_t)}$ or $\min_t \norm{\nabla f(x_t)}$, as is often done in the literature). This result is formalized as \thmref{thm:infdim} in \subsecref{subsec:fixedpoint}.
    \item Without Assumption~\ref{as:hessian} (Lipschitz Hessian) and Assumption~\ref{as:noise} (dispersive noise), then even with rather arbitrary aggregation schemes, the iteration complexity of SGD is $\Omega(\epsilon^{-4})$. This result is formalized as \thmref{thm:aggregation_step} in \subsecref{subsec:sgdlowbound}.
    \item Without Assumption~\ref{as:aggregation} (aggregation) and Assumption~\ref{as:noise} (dispersive noise), the iteration complexity of SGD required to satisfy $\E[\min_t \norm{\nabla f(\bx_t)}]\leq \epsilon$ is $\Omega(\epsilon^{-3})$. This result is formalized as \thmref{thm:nonconvex} in \subsecref{subsec:sgdlowbound}.
    \item Without aggregation, the iteration complexity of SGD with ``reasonable'' step sizes to attain $\E[\min_t \norm{\nabla f(\bx_t)}]\leq \epsilon$ is $\Omega(\epsilon^{-4})$, even for quadratic \emph{convex} functions in moderate dimension and Gaussian noise (namely, all other assumptions are satisfied as well as convexity). This result is formalized as \thmref{thm:sgdlow} in Section \ref{sec:convex}. 
\end{itemize}


It is important to note that 
the SGD algorithm, which is the main focus of this paper, is not necessarily an optimal algorithm (in terms of iteration complexity) for minimizing gradient norms in our stochastic optimization setting.
For example, for convex problems, it is known that it is possible to achieve an iteration complexity of $\tilde{O}(\epsilon^{-2})$, which strictly smaller than our $\Omega(\epsilon^{-4})$ lower bound (see \cite{foster2019complexity}, and for a related result in the deterministic setting see~\cite{nesterov2012make}).
These algorithms are more complicated and less natural than plain SGD, a price that our results indicate might be necessary in order to achieve optimal iteration complexity in some cases.

We conclude this section by noting that following the initial dissemination of our paper, a recent arXiv preprint \cite{arjevani2019lower} studied a similar question of lower complexity bounds for finding stationary points, focusing on algorithm-independent $\Omega(\epsilon^{-4})$ or $\Omega(\epsilon^{-3})$ lower bounds for functions with Lipschitz-continuous gradients. Their results are mostly incomparable to ours. In particular, our \thmref{thm:infdim} studies conditions under which no finite lower bound is possible, \thmref{thm:nonconvex} considers the case where the Hessian (and not just the gradient) is Lipschitz-continuous, and \thmref{thm:sgdlow} shows an $\Omega(\epsilon^{-4})$ lower bound for SGD, which holds \emph{even} if the functions are convex and the noise is simply Gaussian (in contrast, the constructions in \cite{arjevani2019lower} crucially depend on intricate non-convex functions and carefully tailored,  location-dependent noise, using a considerably more involved proof). The result most similar to those in \cite{arjevani2019lower} is \thmref{thm:aggregation_step}, which is specific to SGD, but admits a simpler proof and significantly better constants.


\section{Setting and Notation}\label{sec:setting}

We let bold-face letters denote vectors, use $\be_i$ to denote the canonical unit vector, and use $[T]$ as shorthand for $\{1,2,\ldots,T\}$.

We assume throughout that the objective $f$ maps $\reals^d$ to $\reals$, and
either has an $L$-Lipschitz gradient for some fixed parameter $L>0$ or a $\rho$-Lipschitz Hessian for some $\rho>0$.

We consider algorithms which use a standard stochastic first-order oracle \cite{Book:NemirovskyYudin,agarwal2009information} in order to minimize some optimality criteria: This oracle, given a point $\bx_t$, returns $\nabla f(\bx_t)+\bxi_t$, where $\bxi_t$ is a random variable satisfying 
\[
\E[\bxi_t|\bx_t]=0~~~\text{and}~~~\E[\norm{\bxi_t}^2|\bx_t]\leq \sigma^2
\]
almost surely for some fixed $\sigma^2$. 
In this paper, we focus on optimality criteria involving minimizing gradient norms, using the Stochastic Gradient Descent (SGD) algorithm.
This algorithm, given a budget of $T$ iterations and an initialization point $\bx_1$, produces $T$ stochastic iterates $\bx_1,\ldots,\bx_T$ according to 
\begin{equation}\label{eq:sgd}
 \bx_{t+1}~=~\bx_t-\eta_t \cdot (\nabla f(\bx_t)+\bxi_t)~,    
\end{equation}
where $\eta_t$ is a fixed step-size parameter. In some cases, we will also allow the algorithm to perform an additional aggregation step, generating a point $\bxout$ which is some function of $\bx_1,\ldots,\bx_T$ (for example, the average $\frac{1}{T}\sum_{t=1}^{T}\bx_t$). Additionally, in some of our results, we will allow the step size to be adaptive, and depend on the previous iterates (under appropriate assumptions), in which case we will use the notation
\begin{equation}\label{eq:adaptive_sgd}
 \bx_{t+1} =\bx_t - \eta_{\bx_1,\dots,\bx_t} \cdot (\nabla f(\bx_t)+\bxi_t). 
\end{equation}
Regarding the initial conditions, we make the standard assumption\footnote{See e.g.~\cite{Book:Nesterov} and references mentioned earlier.} that $\bx_1$ has bounded suboptimality, i.e.,
\[
    f(\bx_1)-f(\bx_*)\leq \Delta
\]
for some fixed $\Delta > 0$, where in the convex case, we assume $\bx_*$ is some point $\bx_* \in \arg\min_{\bx}f(\bx)$, and in the non-convex case, we assume $\bx_*$ is a stationary point with $f(\bx_*) \leq f(\bx_t)$ for all $t\in [T]$. We note that some analyses (see for example \cite{allen2018make,foster2019complexity}) replace the assumption $f(\bx_1)-f(\bx_*)\leq \Delta$ with the assumption $\norm{\bx_1-\bx_*}\leq R$, but we do not consider this variant in this paper (in fact, some of our constructions rely on the fact that even if $f(\bx_1)-f(\bx_*)$ is small, $\norm{\bx_1-\bx_*}$ might be very large). It should also be pointed out that in the non-convex setting, $\bx_*$ might not be uniquely defined or even belong to a single connected set, which makes $\norm{\bx_1-\bx_*}$ somewhat ambiguous.




\section{Lower bounds in the non-convex case}\label{sec:nonconvex}
In this section, we present several lower bounds relating to first-order methods in the non-convex stochastic setting. We start by considering a wide range of first-order methods, showing that if we consider any point which is a \emph{fixed} function of the iterates, then \emph{no} meaningful, dimension-free worst-case bound can be attained on its expected gradient norm. We conclude that it is necessary for any useful optimality criterion to relate to more than one iterate in some way, 
as is indeed the case with the standard optimality criteria, which considers the average expected norm of the gradients ($\frac{1}{T} \sum_t \E\|\nabla f(\bx_t)\|$) or the minimal expected norm of the gradients ($\min_t \E\|\nabla f(\bx_t)\|$).

We then turn our focus to the SGD method under the standard set of assumptions (see \secref{sec:setting}), and show that it requires $\Omega(\epsilon^{-4})$ iterations (or $\Omega(\epsilon^{-3})$ with Lipschitz Hessians) to attain a value of $\epsilon$ for any of the standard optimality criteria mentioned above.

\subsection{Impossibility of minimizing the gradient at any fixed point}
\label{subsec:fixedpoint}

In this subsection, we show that in the nonconvex setting, perhaps surprisingly, \emph{no} meaningful iteration complexity bound can be provided on $\norm{\nabla f(\bxout)}$, where $\bxout$ is the point returned by any fixed, deterministic aggregation scheme which depends continuously on the iterates and stochastic gradients (for example, some fixed weighted combination of the iterates). 

To state the result, recall that SGD can be phrased in an oracle-based setting, where we model an optimization algorithm as interacting with a stochastic first-order oracle: Given an initial point $\bx_1$, at every iteration $t=2,\ldots,T$, the algorithm chooses a point $\bx_t$, and the oracle returns a stochastic gradient estimate $\bg_t:=\nabla f(\bx_t)+\bxi_t$, where $\E[\bxi_t|\bx_t]=0$ and $\E[\norm{\bxi_t}^2|\bx_t]\leq \sigma^2$ for some known $\sigma^2$. The algorithm then uses $\bg_t$ (as well as $\bg_1,\ldots,\bg_{t-1}$ and $\bx_1,\ldots,\bx_{t}$) to select a new point $\bx_{t+1}$. After $T$ iterations, the algorithm returns a final point $\bxout$, which depends on $\bg_1,\ldots,\bg_T$ and $\bx_1,\ldots,\bx_T$. 

\begin{theorem}\label{thm:infdim}
Consider any deterministic algorithm as above, which satisfies the following:
\begin{itemize}
    \item There exists a finite $C_{T}$ (dependent only on $T$) such that for any initialization $\bx_1$ and any $t\in [T]$, if $\bg_1=\ldots=\bg_{t}=\mathbf{0}$, then $\norm{\bx_{t+1}-\bx_1}\leq C_{T}$. Moreover, if this holds for $t=T$, then $\norm{\bxout-\bx_1}\leq C_T$.
    \item For any $t\in [T]$, $\bx_{t+1}$ is a fixed continuous function of $\bx_1,\bg_1,\ldots,\bx_t,\bg_t$, and $\bxout$ is a fixed continuous function of $\bx_1,\bg_1,\ldots,\bx_T,\bg_T$.
\end{itemize}
Then for any $\delta\in (0,1)$, and any choice of random variables $\bxi_t$ satisfying the assumptions above, there exists a dimension $d$, a twice-differentiable function $f:\reals^d\mapsto\reals$ with $2$-Lipschitz gradients and $4$-Lipschitz Hessians, and an initialization point $\bx_1$ satisfying $f(\bx_1)-\inf_\bx f(\bx)\leq 1$, such that $\norm{\nabla f(\bxout)}\geq \frac{1}{2}$ holds with probability at least $1-\delta$.	Moreover, if there is no stochastic noise ($\bxi_t\equiv \mathbf{0}$), then the result holds for $d=1$.
\end{theorem}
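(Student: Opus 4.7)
The plan is to make $f$ depend on the algorithm's output through a fixed-point argument: parameterize a family of admissible functions by a location, push it through the algorithm, and invoke a topological fixed-point theorem to find a self-consistent parameter for which the algorithm's output sits exactly where the gradient is prescribed to be $\tfrac{1}{2}$.

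I would begin with the noise-free case in dimension one, which gives the ``moreover'' clause. Fix a smooth bump $\phi:\reals\to\reals$ with $\phi(0)=0$, $\phi'(0)=\tfrac{1}{2}$, $|\phi'|\le 2$, $|\phi''|\le 4$, and compact support short enough that $\sup\phi-\inf\phi\le 1$. For each $y\in\reals$ define $f_y(x):=\phi(x-y)$, so $|f_y'(y)|=\tfrac{1}{2}$, and let $\xout(y)$ denote the algorithm's output when run from $\bx_1$ on $f_y$ with $\bxi_t\equiv 0$. The continuity assumption makes $y\mapsto\xout(y)$ continuous. When $|y|$ exceeds $C_T$ plus the diameter of $\mathrm{supp}\,\phi$, the support of $\phi(\cdot-y)$ is disjoint from the ball of radius $C_T$ around $\bx_1$, so every queried gradient is zero; the bounded-displacement assumption then forces the entire trajectory, including $\xout(y)$, to lie within $C_T$ of $\bx_1$, so $\xout(y)$ equals a constant $\xout_0$ in this regime. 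Hence $H(y):=\xout(y)-y$ is continuous and satisfies $H(y)\to\pm\infty$ as $y\to\mp\infty$, and IVT supplies $y^\star\in\reals$ with $\xout(y^\star)=y^\star$; taking $f:=f_{y^\star}$ yields $|\nabla f(\xout)|=|\phi'(0)|=\tfrac{1}{2}$.

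For the noisy case I would work in dimension $d\gg T$ and use functions of the form $f_{\bv,y}(\bx):=\phi(\inner{\bx-\bx_1,\bv}-y)$, whose gradient is one-dimensional along $\bv$ with magnitude $|\phi'(\inner{\bxout-\bx_1,\bv}-y)|$. The goal reduces to finding $(\bv,y)\in S^{d-1}\times\reals$ such that, on a noise-event of probability at least $1-\delta$, the scalar $s(\bv,y;\bxi):=\inner{\bxout(\bv,y,\bxi)-\bx_1,\bv}-y$ lies in the plateau of $\phi'$ (a small neighborhood of $0$). On the event $E=\{\sum_{t\le T}\|\bxi_t\|^2\le T\sigma^2/\delta\}$, which by Markov has probability at least $1-\delta$, the iterates remain in a bounded region whose radius depends only on $T$, $\sigma^2$, $\delta$, and the modulus of continuity of the algorithm. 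A pigeonhole/averaging argument over the sphere (feasible because the per-direction noise budget scales like $1/d$) then furnishes, for $d$ large enough, a direction $\bv^\star$ along which placing the bump perturbs the iterate projections by an arbitrarily small amount uniformly on $E$; once this near-decoupling is in place, the scalar IVT argument of the previous paragraph produces a matching $y^\star$, so that $f:=f_{\bv^\star,y^\star}$ satisfies $\|\nabla f(\bxout)\|\ge \tfrac{1}{2}$ on $E$.

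The hard part will be the noisy paragraph: the noise is constrained only in first and second moments, so it can be heavy-tailed, state-dependent, and highly anisotropic, none of which mesh automatically with a Brouwer-type fixed-point argument. The key insight to make the reduction rigorous is that $T$ iterations of a continuous algorithm driven by noise of trace-variance at most $\sigma^2$ can effectively ``see'' only a low-dimensional subspace of $\reals^d$, so for $d$ large enough as a function of $T$, $\sigma^2$, and $\delta$ there is always a direction $\bv^\star$ along which the combined influence of noise and the bump on the trajectory is negligible; choosing this $\bv^\star$ carefully (rather than generically) and verifying that the map $y\mapsto s(\bv^\star,y;\bxi)$ remains monotone-at-infinity uniformly on $E$ are the key technical steps required to transfer the one-dimensional IVT argument to the noisy setting.
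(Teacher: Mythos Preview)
Your proposal is correct and follows essentially the same strategy as the paper: construct a one-dimensional function whose gradient vanishes outside a compact interval, use the intermediate value theorem to force $\bxout$ to land at a point of prescribed gradient, and then handle noise by embedding in high dimension and choosing a direction along which the noise variance is negligible.

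The only differences are cosmetic. You shift the bump while holding $\bx_1$ fixed; the paper fixes a sigmoid $s(\cdot)$ and shifts $\bx_1$. These are equivalent by translation. Your bump versus the paper's sigmoid is immaterial since both have compactly supported gradient. For the noisy case, the paper's reduction is simpler than what you are anticipating: it just picks the coordinate $r=\arg\min_j\max_t\E[\langle\bxi_t,\be_j\rangle^2]$ (which has variance $\le\sigma^2/d$ since the total is $\le\sigma^2$), applies Chebyshev to make $\max_t|\langle\bxi_t,\be_r\rangle|$ arbitrarily small with probability $\ge 1-\delta$, and then argues by continuity that the noisy trajectory along coordinate $r$ is uniformly close to the noiseless one, so the \emph{same} deterministic parameter found by IVT in the noiseless case works. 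Your event $E=\{\sum_t\|\bxi_t\|^2\le T\sigma^2/\delta\}$ is too coarse for this (it does not bound the component along any particular direction), but replacing it with the coordinate-wise Chebyshev event resolves this, and no sphere averaging or Brouwer-type argument is needed.
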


Intuitively, the first condition in the theorem requires that the algorithm does not ``move'' too much from the initialization point $\bx_1$ if all stochastic gradients are zero (this is trivially satisfied for SGD, and any other reasonable algorithm we are aware of), while the second condition requires the iterates produced by the algorithm to depend continuously on the previous iterates and stochastic gradients (again, this is satisfied by SGD). 
By constructing a one-dimensional function whose gradient is zero over two disjoint regions (see \figref{fig:sfunc}), these two conditions allow the application of the intermediate value theorem to find an initialization point such that function value at $\bxout$ attains any value in between the function values at the regions.

The theorem suggests that to get non-trivial results, we must either use a dimension-dependent analysis, use a non-continuous/adaptive/randomized scheme to compute $\bxout$, or measure the performance of the generated sequence using an optimality criterion that does not depend on a fixed point (e.g., the average gradient $\frac{1}{T}\sum_{t=1}^{T}\norm{\nabla f(\bx_t)}$ or $\min_{t}\norm{\nabla f(\bx_t)}$). We note that the positive result of \cite{fang2019sharp} assumes both finite dimension, and computes $\bxout$ according to an adaptive non-continuous decision rule (involving branching depending on how far the iterates have moved), hence there is no contradiction to the alluded theorem.

\begin{proof}[Proof of \thmref{thm:infdim}]
	We will first prove the result in the case where there is no noise, i.e. $\bg_t=\nabla f(\bx_t)$ deterministically, in which case $\bx_2,\ldots,\bx_{T}$ and $\bxout$ are deterministic functions of $\bx_1$. 
	To that end, let $d=1$ and let $f(x)=s(x)$, where $s$ is the sigmoid-like function (see \figref{fig:sfunc})
	\begin{align*}
	s(x)=\begin{cases}
	-\frac{1}{2} & x\leq -1,\\
    \frac{2}{3}(x+1)^3 - \frac{1}{2} & x\in [-1,-\frac{1}{2}], \\
	-\frac{2}{3}x^3+x & x\in [-\frac{1}{2}, \frac{1}{2}], \\
	\frac{2}{3}(x-1)^3+\frac{1}{2} & x\in [\frac{1}{2},1], \\
	\frac{1}{2} & x\geq 1.\end{cases}
	\end{align*}
	This function smoothly and monotonically interpolates between $-1/2$ at $x=-1$ at $1/2$ at $x=1$. It can be easily verified to have $2$-Lipschitz gradients and $4$-Lipschitz Hessians, and for any $x$, satisfies 
	$f(x)-\inf_{x}f(x) \leq 1$. 

    \begin{figure}
    \vskip 0.2in
    \begin{center}
    \centerline{\includegraphics[width=0.5\columnwidth]{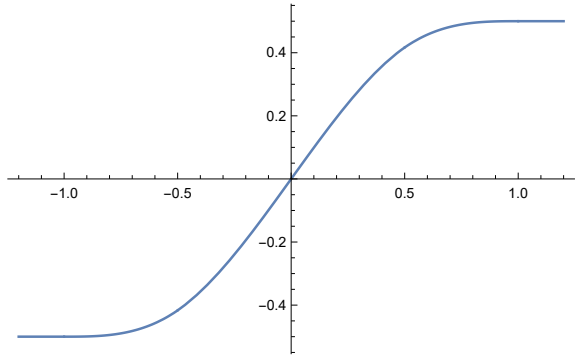}}
    \caption{The function $s(x)$.}
    \label{fig:sfunc}
    \end{center}
    \vskip -0.2in
    \end{figure}

	Let us consider the
	iterates generated by the algorithm, $x_1,\dots,x_T$ and $\xout$, as we make $x_1\rightarrow\infty$. Our function is such that $\nabla f(x)=0$ for all $x\geq 1$, so at every iteration, the algorithm gets $g_t=0$ as long as $x_t \geq 1$. Moreover, by the assumptions, as long as the gradients are zero, $|x_t-x_1|$ is bounded. As a result, by induction and our assumption that $|\xout-x_1|$ is bounded, we get that $\xout\rightarrow\infty$. A similar argument shows that when $x_1\rightarrow -\infty$, we also have $\xout\rightarrow -\infty$. 
	
	Next, we argue that $\xout$ is a continuous function of $x_1$. Indeed, $x_2$ is a continuous function of $x_1$, since it is a continuous function of $g_1=\nabla f(x_1)$ by assumption, and $\nabla f(x_1)$ is Lipschitz (hence continuous) in $x_1$, and compositions of continuous functions is continuous. By induction, a similar argument holds for $x_t$ for any $t$, and hence also to $\xout$.
	
	Overall, we showed that $\xout$ is a continuous function of $x_1$, that $\xout\rightarrow\infty$ when $x_1\rightarrow\infty$, and that $\xout\rightarrow-\infty$ when $x_1\rightarrow-\infty$.  Therefore, by the mean value theorem, there exists 
	some $x_1$ for which $\xout$ is precisely zero, in which case $|f'(\xout)|=|f'(0)|=|s'(0)|=1$, satisfying the Theorem statement. 
	
	It remains to prove the theorem in the noisy case, where $\bxi_i$ are non-zero random variables. In that case, instead of choosing $f(x)=s(x)$, we let $f(\bx)=s(\langle \bx, \be_r \rangle)$, where the coordinate $r$ is defined as 
	\[
	r:=\arg\min_{j\in [d]} \max_{t\in [T]}\E[\langle\bxi_t, \be_j\rangle^2].
	\]
	Since $\max_t \E[\norm{\bxi_t}^2]=\max_t \sum_{j=1}^{d}\E[\langle\bxi_{t}, \be_j\rangle^2]$ is bounded by $\sigma^2$ independently of $d$, it follows that the variance of $\bxi_1,\ldots,\bxi_t$ along coordinate $r$ goes to zero as $d\rightarrow \infty$. Therefore, by making $d$ large enough and using Chebyshev's inequality, we can ensure that $\max_t |\langle \bxi_{t},\be_j\rangle|$ is arbitrarily small with arbitrarily high probability. Since the gradients of $f$ are Lipschitz, and we assume each $\bx_{t+1}$ is a continuous function of the noisy gradients $\bg_1,\bg_2,\ldots,\bg_t$, it follows that the trajectory of $\bx_1,\bx_2,\ldots,\bx_{T}$ and $\bxout$ on the $j$-th coordinate can be made arbitrarily close to the noiseless case analyzed earlier (where $\bxi_t\equiv \mathbf{0}$), with arbitrarily high probability. In particular, we can find an initialization point $\bx_1$ such that the $j$-th coordinate of $\bxout$ is arbitrarily close to $0$, hence the gradient is arbitrarily close to $1$ (and in particular, larger than $1/2$). 
\end{proof}

\begin{remark}[Randomized Algorithms]
The theorem considers deterministic algorithms for simplicity, but the same proof idea holds for larger families of randomized algorithms, where the randomness is used ``obliviously''. For example, consider the popular technique of adding random perturbations to the iterates: If the perturbations have a fixed distribution with finite variance, then we can always embed our construction in a high enough dimension, so that the effective variance of the perturbations is arbitrarily small, and we are back to the deterministic setting.
\end{remark}

\subsection{Lower bounds on SGD}\label{subsec:sgdlowbound}

In this subsection, we focus on the analysis of SGD in the nonconvex setting. We present two main results: A lower bound on the performance of SGD with an aggregation step for objectives with $L$-Lipschitz gradient, followed by a lower bound in the case where the objective has $\rho$-Lipschitz Hessian that applies to ``plain'' SGD methods that do not perform an aggregation step.
In both cases, the step sizes chosen by the method are allowed to be adaptive, in the sense that they are allowed to depend on past iterates and gradients. This dependence is \emph{not} allowed to be completely general, but rather we assume that the dependence on the past iterates and gradients is done through a function of their norm and the dot-products between them (in the Lipschitz Hessian case, we also allow the step size to depend on the Hessians).
Note, that all commonly used adaptive schemes (including Adagrad~\cite{duchi2011adaptive}, normalized gradient~\cite{nesterov1984minimization,kiwiel2001convergence}, among others) follow this type of adaptive scheme.

We start the analysis with a technical lemma. 

\begin{lemma}\label{L:detbound}
Let $f:\reals^d \mapsto \reals$ be a function with $L$-Lipschitz gradient, and assume that the vectors $\by_1,\dots,\by_m, \bz_1,\dots,\bz_n, \bgamma\in \reals^d$ ($n,m\in\mathbb{N}$) are such that 
\begin{enumerate}
    \item\label{itm:grad_eq} $\nabla f(\by_1) = \dots = \nabla f(\by_m)= \nabla f(\bz_1) = \dots = \nabla f(\bz_n) = \bgamma$,
    \item\label{itm:constant_vals} $f(\by_1)=\dots=f(\by_m)$, and
    \item\label{itm:constant_prod} $\langle \bgamma, \by_1\rangle = \dots = \langle \bgamma, \by_m\rangle$.
\end{enumerate}
Then there exists a function $\hat f$ with $L$-Lipschitz gradient that has the same first-order information as $f$ at $\{\bz_i\}_{i\in [n]}$, i.e., for all $i\in [n]$
\begin{align*}
    & \hat f(\bz_{i}) = f(\bz_{i}), \\
    & \nabla \hat f(\bz_i) = \nabla f(\bz_i) = \bgamma,
\end{align*}
has the same gradient as $f$ at $\{\by_i\}_{i\in [n]}$, i.e., for all $j\in [m]$
\begin{align*}
    & \nabla \hat f(\by_j) = \nabla f(\by_j) = \bgamma,
\end{align*}
and is bounded from below:
\[
    \inf_{x \in\reals^d} \hat f(x) \geq \min_{k\in [n]} f(\bz_k) - \frac{3}{2L} \|\bgamma\|^2.
\]
\end{lemma}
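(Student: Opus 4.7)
The plan is to exploit that $f$ is $L$-smooth via the descent lemma to obtain quadratic upper bounds around each $\bz_k$. Specifically, for each $k$, set
$\hat f_k(x):=f(\bz_k)+\langle\bgamma,x-\bz_k\rangle+\tfrac{L}{2}\|x-\bz_k\|^2$;
by condition~\ref{itm:grad_eq} and the descent lemma, $\hat f_k\geq f$ pointwise, $\hat f_k(\bz_k)=f(\bz_k)$, and $\nabla\hat f_k(\bz_k)=\bgamma$. Moreover, comparing two such bounds at the prescribed points gives $\hat f_{k'}(\bz_k)\geq f(\bz_k)=\hat f_k(\bz_k)$ for any $k,k'$. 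Conditions~\ref{itm:constant_vals} and~\ref{itm:constant_prod} on the $\by_j$'s further say that they all lie on a common affine tangent plane $A(x):=f(\by_1)+\langle\bgamma,x-\by_1\rangle$, which will let me handle the $\by_j$-gradient condition by attaching analogous quadratic ``bumps'' there.

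The natural first attempt is $\hat f(x)=\min_i \hat h_i(x)$, the pointwise minimum of all these quadratics indexed over both the $\bz_k$'s and the $\by_j$'s. The descent lemma comparisons above show that the minimum at each $\bz_k$ is attained by $\hat f_k$ and equals $f(\bz_k)$, and a symmetric calculation based on conditions~\ref{itm:constant_vals}--\ref{itm:constant_prod} handles the $\by_j$'s. Each individual quadratic has infimum at least its ``center value'' minus $\tfrac{1}{2L}\|\bgamma\|^2$, which already yields a first-order lower bound on $\inf \hat f$ close to the desired one.

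The main obstacle will be smoothness: a pointwise minimum of smooth functions is only $C^0$ along the loci where the active index switches, so the raw $\min$ does not qualify as $L$-Lipschitz-gradient. I therefore expect the bulk of the proof to consist of replacing $\min$ by a smooth surrogate---via an inf-convolution with a scaled quadratic kernel, a log-sum-exp soft-min, or a partition-of-unity glueing centered at the prescribed points---tuned so that the prescribed values and gradients are preserved exactly and the overall gradient Lipschitz bound stays at $L$. The extra slack between the tight $\tfrac{1}{2L}\|\bgamma\|^2$ and the stated $\tfrac{3}{2L}\|\bgamma\|^2$ is what I would spend on this smoothing overhead and on the need to vertically shift the $\by_j$-bumps so that they do not drag $\inf \hat f$ below $\min_k f(\bz_k)-\tfrac{3}{2L}\|\bgamma\|^2$ (note that $f(\by_j)$ itself is not a priori lower-bounded by $\min_k f(\bz_k)$, so one cannot naively insert $\hat g_j(x)=f(\by_j)+\langle\bgamma,x-\by_j\rangle+\tfrac{L}{2}\|x-\by_j\|^2$ into the minimum without losing the target lower bound).
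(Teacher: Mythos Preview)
Your starting move---building the quadratic upper envelopes $\hat f_k(x)=f(\bz_k)+\langle\bgamma,x-\bz_k\rangle+\tfrac{L}{2}\|x-\bz_k\|^2$ via the descent lemma and taking their pointwise minimum---is exactly the right intuition, and is in fact what the paper's construction reduces to at the vertices of a simplex. The observation that each $\hat f_k$ has infimum $f(\bz_k)-\tfrac{1}{2L}\|\bgamma\|^2$, and that the $\by_j$-values must be \emph{reassigned} (not taken to be $f(\by_j)$) to keep the global infimum controlled, are also both correct and important.

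The genuine gap is the smoothing step. None of the three devices you propose will do the job: a log-sum-exp soft-min perturbs all function values (every exponential contributes at every point), a Moreau/inf-convolution envelope satisfies $g_\mu(x)<g(x)$ wherever the subgradient is nonzero (so it destroys the exact values $f(\bz_k)$), and a partition-of-unity glueing gives no handle on the global gradient Lipschitz constant. The paper sidesteps this entirely by invoking a sharp interpolation theorem (Theorem~\ref{T:nonnegative_interpolation}, built on \cite{drori2017exact,taylor2015exact}): whenever a finite data set $\{(\bx_t,\bg_t,f_t)\}$ satisfies the nonconvex interpolation inequalities~\eqref{E:nonconvex_interpolation_conditions}, there \emph{exists} an $L$-smooth interpolant whose infimum is exactly $\min_t(f_t-\tfrac{1}{2L}\|\bg_t\|^2)$. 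That interpolant is itself a minimum over the full simplex (not just the vertices) of your quadratics, and its $L$-smoothness is a nontrivial consequence of the interpolation conditions---this is precisely the ``smoothing'' you are looking for, but it is not a generic mollification.

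With this tool in hand, the paper's proof becomes short: one checks that the data $\{(\bz_i,\bgamma,f(\bz_i))\}\cup\{(\by_j,\bgamma,\beta)\}$ satisfies~\eqref{E:nonconvex_interpolation_conditions} for a suitable common value $\beta$ (this is where assumptions~\ref{itm:constant_vals} and~\ref{itm:constant_prod} enter, since they make the $\by$--$\by$ conditions trivial and the $\by$--$\bz$ conditions a single interval constraint on $\beta$), and then shows that the upper endpoint $\hat\beta$ of the feasible interval satisfies $\hat\beta\geq\min_k f(\bz_k)-\tfrac{1}{L}\|\bgamma\|^2$. Your ``vertical shift of the $\by_j$-bumps'' is exactly this choice of $\beta$; what you are missing is the interpolation machinery that converts the shifted data into a globally $L$-smooth function.
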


We postpone the proof of this lemma to the appendix and turn to present the first main result of this subsection.

\begin{theorem}\label{thm:aggregation_step}
Consider a first-order method that given a function $f:\reals^d\rightarrow\reals$ and an initial point $\bx_1\in \reals^d$ generates a sequence of $T\in\mathbb{N}$ points $\{\bx_i\}$ satisfying
\begin{align*}
     & \bx_{t+1} = \bx_t + \eta_{\bx_1,\dots,\bx_t} \cdot (\nabla f(\bx_t)+\bxi_t), \quad t\in [T-1],
\end{align*}
where $\bxi_i$ are some random noise vectors, and returns a point $\bxout \in \reals^d$ as a non-negative linear combination of the iterates:
\[
    \bxout = \sum_{t=1}^T \zeta^{(t)}_{\bx_1,\dots,\bx_T} \bx_t.
\]
We further assume that the step sizes $\eta_{\bx_1,\dots,\bx_t}$ and aggregation coefficients $\zeta^{(t)}_{\bx_1,\dots,\bx_T}$ are deterministic functions of the norms and inner products between the vectors $\bx_1,\dots,\bx_t, \nabla f(\bx_1)+\bxi_1,\dots, \nabla f(\bx_t)+\bxi_t$.
Then for any $L,\Delta, \sigma\in \reals_{++}$ there exists a function $f:\reals^{T}\mapsto \reals$ with $L$-Lipschitz gradient, a point $\bx_1\in \reals^T$ and independent random variables $\bxi_t$ with $\E[\bxi_t]=0$ and $\E[\|\bxi_t\|^2]=\sigma^2$ such that
\begin{align*}
    & f(\bx_1) - \inf_\bx f(\bx) \overset{\text{a.s.}}{\leq} \Delta, \\
    & \nabla f(\bx_t) \overset{\text{a.s.}}{=} \bgamma, \qquad \forall t\in[T],\\
    & \nabla f(\bxout) \overset{\text{a.s.}}{=} \bgamma,
\end{align*}
where $\bgamma \in \reals^T$ is a vector such that
\begin{align*}
    \|\bgamma\|^2 &= \frac{\sigma}{16(T-1)} \left(\sqrt{64 L \Delta (T-1)+9 \sigma ^2}-3 \sigma \right) \\
    & \underset{T\gg 1}{\approx} \frac{\sigma}{2} \sqrt{\frac{L \Delta}{T-1}}.
\end{align*}
\end{theorem}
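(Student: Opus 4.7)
The plan is to force the algorithm's iterates to lie on a sign-symmetric orbit where a single function can simultaneously have prescribed gradient $\bgamma$ at every point. I would set $d=T$, $\bgamma=\gamma\be_1$ for a scalar $\gamma$ to be fixed later, and draw the noise as $\bxi_t=\sigma s_t\be_{t+1}$ with independent Rademacher signs $s_t$; this immediately satisfies $\E[\bxi_t]=0$ and $\E\|\bxi_t\|^2=\sigma^2$. The crucial observation is that every norm and inner product among $\{\bx_i\}_{i\leq t}$ and $\{\bgamma+\bxi_i\}_{i\leq t}$ is blind to the signs $s_i$ (each $s_i$ enters only through $s_i^2=1$, since it appears either squared inside a norm or paired with itself inside an inner product). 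Consequently the hypothesis on the algorithm implies that $\eta_t$ and $\zeta^{(t)}$ are deterministic functions of $\gamma,\sigma,L$ and the algorithm's rule alone, and for each $t$ the $2^{t-1}$ possible realizations of $\bx_t$ share the same first coordinate and are mirror images of one another along the noise axes $\be_2,\dots,\be_t$.

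With this symmetry reduction, I would invoke Lemma~\ref{L:detbound} starting from the affine function $f_0(\bx)=\gamma\langle\be_1,\bx\rangle$, which trivially has $L$-Lipschitz gradient identically equal to $\bgamma$. I take the $\bz$-set of the lemma to be the union over all sign patterns of the realizations of $\bx_1,\dots,\bx_T$ and $\bxout$; on each such point $\nabla f_0=\bgamma$ and $f_0$ takes a value depending only on the shared first coordinate, so the lemma's hypotheses hold with each time-$t$ orbit forming a valid $\by$-bucket (equal function value and equal inner product with $\bgamma$). The lemma then produces $\hat f$ with $L$-Lipschitz gradient, $\nabla\hat f(\bx_t)=\nabla\hat f(\bxout)=\bgamma$ almost surely, and $\inf\hat f\geq\min_k f_0(\bz_k)-\tfrac{3}{2L}\|\bgamma\|^2$.

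The remaining task is to choose $\gamma$ as large as possible subject to $\hat f(\bx_1)-\inf\hat f\leq\Delta$. Because $f_0$ decreases linearly in the first coordinate along the iterates, the minimum of $f_0$ over the $\bz$-set is attained at $\bx_T$, yielding the preliminary bound $\hat f(\bx_1)-\inf\hat f\leq\gamma^2 S+\tfrac{3}{2L}\gamma^2$ with $S=\sum_t\eta_t$. The adversarial step-size sequence is then constrained by $L$-smoothness of $\hat f$: since $\|\bg_t\|^2=\gamma^2+\sigma^2$ deterministically, the per-step descent obeys $\hat f(\bx_{t+1})\geq \hat f(\bx_t)-\eta_t\gamma^2-\tfrac{L}{2}\eta_t^2(\gamma^2+\sigma^2)$, and maximizing the summed descent over $\eta_t$ (attained at $\eta_t^{\star}=\gamma^2/(L(\gamma^2+\sigma^2))$) forces the algorithm to burn $\Delta$-budget proportional to $(T-1)\gamma^4/(\gamma^2+\sigma^2)$ in the worst case. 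Combining this with the lemma estimate yields a quadratic inequality in $\gamma^2$ of the form $c_1(T-1)\gamma^4/\sigma^2+c_2\gamma^2\leq c_3 L\Delta$ in the $\gamma\ll\sigma$ regime, which the usual quadratic formula solves to give precisely the closed form in the statement. The main obstacle is this last coupling: the adaptive $\eta_t$ depends on $\gamma$ through the observed norm $\sqrt{\gamma^2+\sigma^2}$, so the $\Delta$-constraint and the smoothness descent inequality interact non-linearly, and one must carefully account for the lemma's $\tfrac{3}{2L}\|\bgamma\|^2$ penalty, the aggregation-output contribution to $\min_k f_0(\bz_k)$, and the worst-case $\eta_t^{\star}$ to pin down the exact constants $64$, $9$, $16$, and $3$ in the theorem's formula.
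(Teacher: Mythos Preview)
Your noise construction and the symmetry argument showing that $\eta_t,\zeta^{(t)}$ become deterministic are exactly right, and match the paper. The gap is in how you control $\hat f(\bx_1)-\inf_\bx\hat f(\bx)$.

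When you apply Lemma~\ref{L:detbound} to the \emph{affine} function $f_0(\bx)=\gamma\langle\be_1,\bx\rangle$, the lemma only guarantees $\inf\hat f\ge\min_k f_0(\bz_k)-\tfrac{3}{2L}\gamma^2$, and with $f_0$ affine this minimum is $f_0(\bx_T)=-\gamma^2\sum_t\eta_t$. You then try to cap $\sum_t\eta_t$ via the smoothness descent inequality on $\hat f$, but the inequality you wrote,
\[
\hat f(\bx_{t+1})\ge\hat f(\bx_t)-\eta_t\gamma^2-\tfrac{L}{2}\eta_t^2(\gamma^2+\sigma^2),
\]
is an \emph{upper} bound on the per-step descent $\hat f(\bx_t)-\hat f(\bx_{t+1})\le\eta_t\gamma^2+\tfrac{L}{2}\eta_t^2(\gamma^2+\sigma^2)$, which is monotone increasing in $\eta_t$ and hence unbounded. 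The maximizer $\eta_t^\star=\gamma^2/(L(\gamma^2+\sigma^2))$ you quote comes from the \emph{other} smoothness inequality, $\hat f(\bx_{t+1})\le\hat f(\bx_t)-\eta_t\gamma^2+\tfrac{L}{2}\eta_t^2(\gamma^2+\sigma^2)$, but that one is a \emph{lower} bound on descent and says nothing useful about how large $\hat f(\bx_1)-\hat f(\bx_T)$ can be. Since the step sizes $\eta_t$ are chosen by the algorithm, not by you, an algorithm that happens to pick large steps makes $f_0(\bx_T)$ arbitrarily negative, and Lemma~\ref{L:detbound} then gives no useful lower bound on $\inf\hat f$.

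What the paper does---and what your argument is missing---is to build the penalty directly into the input function \emph{before} invoking Lemma~\ref{L:detbound}. It takes
\[
f(\bx)=G\langle\bx,\be_1\rangle+\sum_{t=1}^{T-1}h_t(\langle\bx,\be_{t+1}\rangle),
\]
where each $h_t$ is an even, $L$-smooth bump satisfying $h_t(0)=0$, $h_t(\pm\eta_t\sigma)=\tfrac{L}{16}\eta_t^2\sigma^2$, and $h_t'(0)=h_t'(\pm\eta_t\sigma)=h_t'(\pm\zeta_{t+1}\eta_t\sigma)=0$. These bumps leave the gradients at all iterates and at $\bxout$ equal to $G\be_1$, but raise $f(\bx_t)$ by $\sum_{k<t}\tfrac{L}{16}\eta_k^2\sigma^2$. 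Now $f(\bx_1)-f(\bx_t)=\sum_{k<t}(\eta_kG^2-\tfrac{L}{16}\eta_k^2\sigma^2)$, each summand is bounded above by $\tfrac{4G^4}{L\sigma^2}$ \emph{for every} $\eta_k$, and the $\Delta$-budget constraint becomes the quadratic that yields the stated constants. Your descent-lemma heuristic is the right intuition for \emph{why} such bumps are consistent with $L$-smoothness, but it does not replace their explicit construction.
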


\begin{proof}
We will assume that the algorithm performs gradient steps with fixed step-size $\eta_t$ and aggregation coefficients $\zeta_i$, i.e., the algorithm is defined by the rule
\begin{align*}
    & \bx_{t+1} = \bx_t - \eta_t (\nabla f(\bx_t) + \bxi_t), \quad t\in[T-1],  \\
    & \bxout = \sum_{t=1}^T \zeta_i \bx_i.
\end{align*}
The analysis of the general case appears in the appendix.

Under this assumption, the proof proceeds by 
\begin{enumerate*}
\item
defining an adversarial objective and noise distribution, 
\item showing that the gradients of the objective at the iterates posses the claimed properties, then
\item using \lemref{L:detbound}, modifying the objective so that the claimed lower bound on the function is attained, while keeping the behavior of the function at the iterates unaffected.
\end{enumerate*}

We start by defining an adversarial example, choosing the noise vectors $\{\bxi_t\}$ to be independent random variables distributed such that
\begin{equation*}\label{def:nonconvex_xi}
    P(\bxi_t=\pm \sigma \be_{t+1}) = \frac{1}{2}, \quad t\in [T-1],
\end{equation*}
where $\be_i$ stands for the canonical unit vector, and defining the objective $f:=f_{\{\eta_t\},\{\zeta_t\}}:\real^{T} \mapsto \real$ by
\begin{equation*}\label{def:nonconvex_f}
    f_{\{\eta_t\},\{\zeta_t\}}(\bx) := G \cdot \langle \bx, \be_1\rangle + \sum_{t=1}^{T-1} h_t (\langle \bx, \be_{t+1}\rangle),
\end{equation*}
where $G \geq 0$ is a number chosen such that
\[
	G^2 = \frac{\sigma}{16(T-1)} \left(\sqrt{64 L \Delta (T-1)+9 \sigma ^2}-3 \sigma \right),
\]
and the functions $h_t$ are defined as follows: First denote by $h^{(1,L)}_{b,-}(x)$ and $h^{(1,L)}_{b,+}(x)$ the functions 
(see \figref{fig:hfunc})
\begin{align*}
& h^{(1,L)}_{b,+}(x) := \begin{cases}
    \frac{L}{2} x^2 &   |x| \leq b/4,\\
    \frac{L}{16} b^2  - \frac{L}{2} (|x|-b/2)^2 & b/4 < |x| < b/2, \\
    \frac{L}{16} b^2 & |x|\geq b/2,
\end{cases} \\
& h^{(1,L)}_{b,-}(x) := \begin{cases}
    0               &  |x| \leq b/2,\\
    \frac{L}{2} (|x|-\frac{b}{2})^2 &  \frac{b}{2} \leq |x| \leq \frac{3b}{4},\\
    \frac{L}{16} b^2  - \frac{L}{2} (|x|-b)^2 & \frac{3b}{4} < |x| < b, \\
    \frac{L}{16} b^2 & |x|\geq b,
\end{cases}
\end{align*}
then at indices $t$ where the aggregation coefficient $|\zeta_{t+1}| \leq \frac{1}{2}$, take $h_t$ to be $h_t=h^{(1,L)}_{|\eta_t| \sigma,-}(x)$, and otherwise take $h_t=h^{(1,L)}_{|\eta_t| \sigma,+}$.
Note that for all $t\in[T-1]$, 
\begin{align*}
    & h_t(0)=0, \\
    & h_t(x)=h_t(-x), \quad \forall x\in \reals, \\
    & h_t(\pm \eta_t \sigma) = \frac{L}{16} \eta_t^2 \sigma^2, \\
    & h_t'(0)=h_t'(\pm \eta_t \sigma)=h_t'(\pm \zeta_{t+1} \eta_t \sigma)=0,
\end{align*}
and that $h_t$ has $L$-Lipschitz gradient.
The purpose of the functions $h_t$ is to increase the value of $f(x_i)$ without affecting the gradient information available to the algorithm.

From the definition of $f$ we conclude that $f$ also shares the $L$-Lipschitz gradient of the functions $h_t$ (being a separable sum of functions with $L$-Lipschitz gradient).

\begin{figure}
\vskip 0.2in
\begin{center}
\centerline{\includegraphics[width=0.6\columnwidth]{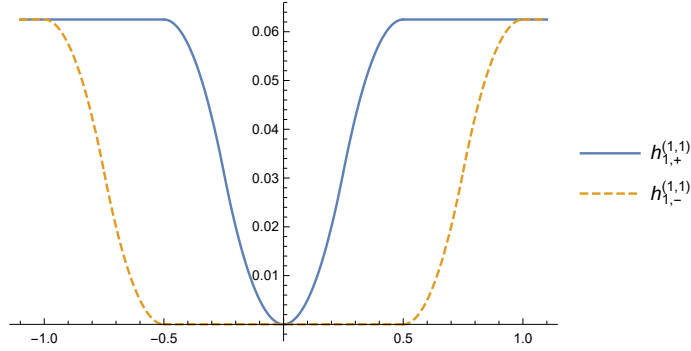}}
\caption{$h^{(1,1)}_{1,-}$ and $h^{(1,1)}_{1,+}$.}
\label{fig:hfunc}
\end{center}
\vskip -0.2in
\end{figure}

We now turn to analyze the dynamics of SGD when applied on the function $f$ defined above.
Given the objective $f$ and the starting point 
\[
    \bx_1=0,
\]
the algorithm at the first iteration sets
\[
    \bx_2 = \bx_1 - \eta_1 (\nabla f(\bx_1)+ \bxi_1) = (-\eta_1 G, \pm \eta_1 \sigma, 0, \dots, 0)^\top,
\]
hence, from the properties of $h_1$, we get
\begin{align*}
    f(\bx_2) 
    &= - G^2 \eta_1 + h_1(\eta_t \sigma), \\
    \nabla f(\bx_2) &= G \be_1 + h_1'(\pm \eta_1 \sigma)\be_2 = G \be_1.
\end{align*}
Similarly, at the $t$-th iteration, $t\in [T-1]$, the algorithm sets
\begin{equation}\label{eq:x_t_value}
\begin{aligned}
    \bx_{t+1} &= \bx_t - \eta_t (\nabla f(\bx_t)+ \bxi_t) ) \\ &= (-\sum_{k=1}^t \eta_k G, \pm \eta_1 \sigma, \dots, \pm \eta_t \sigma, 0, \dots, 0)^\top,
\end{aligned}
\end{equation}
which leads to
\begin{align}
    & f(\bx_t) = - G^2 \sum_{k=1}^{t-1} \eta_k + \sum_{k=1}^{t-1} h_k(\eta_k \sigma), \\
    & \nabla f(\bx_t) = G \be_1. \label{eq:nabla_F_x_t}
\end{align}

At the aggregation step, the algorithm sets
\begin{equation}\label{eq:x_out}
\begin{aligned}
    & \bxout = \sum_{t=1}^T \zeta_t \bx_t \\&= (-\sum_{t=1}^T \zeta_t \sum_{k=1}^{t-1} \eta_k G, \pm \zeta_2 \eta_1 \sigma, \dots, \pm \zeta_T \eta_{T-1} \sigma)^\top,
\end{aligned}
\end{equation}
then by the properties of $h_t$, we get
\begin{align}
    & f(\bxout) = - G^2 \sum_{t=1}^T \zeta_t \sum_{k=1}^{t-1} \eta_k + \sum_{k=1}^{t-1} h_k(\zeta_{k+1} \eta_k \sigma), \label{eq:F_xout} \\
    & \nabla f(\bxout) = G \be_1, \label{eq:nabla_F_xout}
\end{align}
where the 
first equality follows since $h_t$ is even, and second equality follows from $h_t'(\pm \zeta_{t+1} \eta_t \sigma)=0$.

To complete our treatment of the fixed-step case, we turn to show that it is possible to make $f$ bounded from below without affecting the first-order information at the iterates and the gradient at $\bxout$. For this purpose, we continue to show that \lemref{L:detbound} can be applied when taking for $\by_1,\dots,\by_m$ all the possible values the random variable $\bxout$~\eqref{eq:x_out} can attain, and for $\bz_1,\dots,\bz_n$ all possible values the random variables $\{\bx_t\}_{t\in [T]}$ can attain~\eqref{eq:x_t_value}. 

Indeed, in view of~\eqref{eq:nabla_F_x_t} and~\eqref{eq:nabla_F_xout}, the first condition of \lemref{L:detbound} holds with $\bgamma = G \be_1$, the second requirement follows from~\eqref{eq:F_xout}, and the third requirement follows since 
\[
    \langle \nabla f(\bxout), \bxout \rangle = -\sum_{t=1}^T \zeta_t \sum_{k=1}^{t-1} \eta_k G^2
\]
does not depend on the sign of the noise vectors $\bxi_t$.
As all the requirements of \lemref{L:detbound} hold, we conclude that there exists a function $\hat f$ that shares the the same first-order information as $f$ at $\{\bx_t\}_{t\in [T]}$, the same gradient at $\bxout$ and in addition
\[
    \inf_\bx \hat f(\bx) \geq \min_{t\in [T]} f(\bx_t) - \frac{3}{2L} G^2.
\]
We get
\begin{align*}
    & \hat f(\bx_1) - \inf_\bx \hat f(\bx) \\ 
    &\leq 0 - \min_{t\in [T]} f(\bx_t) + \frac{3}{2L} G^2 \\
    & = \max_{t\in [T]} \sum_{k=1}^{t-1} \left(\eta_k G^2 - \frac{L}{16} \eta_k^2 \sigma^2 + \frac{3}{2L(t-1)} G^2 \right)\\
    & \leq (T-1) \frac{G^2}{2 L}  \left(\frac{8 G^2}{\sigma ^2}+\frac{3}{T-1}\right) = \Delta,
\end{align*}
where the second inequality follows by maximizing the concave quadratic form over $\eta_k$ and the last inequality follows from the definition of $G^2$ by basic algebra.

As SGD does not have access to the objective beyond the first-order information at the iterates, we conclude that the algorithm proceeds on $\hat f$ in exactly the same dynamics as it does on $f$, maintaining its behavior as derived above.
\end{proof}

\medskip

The example provided by \thmref{thm:aggregation_step} comes with a guarantee that the gradient of the objective at all iterates is almost surely a constant; as a result, the theorem is applicable for forming lower bounds for all first-order optimality criteria used in the literature, including the best expected gradient norm $\min_t \E \|\nabla f(\bx_t)\|$, average expected gradient norm $\frac{1}{T} \sum_t \E \|\nabla f(\bx_t)\|$, and expected norm of the average gradient $\E \|\frac{1}{T} \sum_t \nabla f(\bx_t)\|$, both when taking the actual gradient and when taking the noisy version of the gradient.

Note that although the theorem does not directly consider randomized sampling schemes for computing $\bxout$, the performance of any scheme that samples $\bxout$ out of $\{\bx_1,\dots,\bx_T\}$ is bounded from below by the optimality criterion $\min_{t}\norm{\nabla f(\bx_t)}$, making the guarantees by the theorem applicable.

\begin{remark}[Tightness results]\label{rem:tight_bounds}
Consider the upper bound by Ghadimi and Lan (see~\thmref{T:upper_noncovex} in the appendix) and set the step size by
$
    \eta_t \equiv \eta:=\sqrt{\frac{2\Delta}{(T-1)L \sigma^2}},
$
where $\Delta$ is an upper bound on $f(\bx_1)-f(\bx_*)$. We obtain
\begin{align*}
    \min_{t \in [T]} \|\nabla f(\bx_t)\|^2 
    \leq& \frac{2\Delta + L (T-1) \eta^2 \sigma^2}{(T-1) \eta (2-L \eta)}
    \\ \underset{T\gg 1}{\approx}&
    \frac{2\Delta + L (T-1) \eta^2 \sigma^2}{2 (T-1) \eta}
    = \sigma \sqrt{\frac{2 L \Delta}{T-1}},
\end{align*}
which establishes on one hand, that the lower bound obtained in \thmref{thm:aggregation_step} on the iterates $\bx_t$ is tight up to the constant factor $2\sqrt{2}$,
and on the other hand, establishes that the constant step-size scheme defined above is optimal up to the same constant.
\end{remark}

The second main result of this subsection gives an $\Omega(\epsilon^{-3})$ lower bound on the performance of ``plain'' SGD methods (i.e., methods that do not perform an aggregation step) acting on objectives with Lipchitz Hessians.

\begin{theorem}\label{thm:nonconvex}
Consider a method that given a function $f:\reals^d\rightarrow\reals$ and an initial point $\bx_1\in \reals^d$ generates a sequence of $T\in\mathbb{N}$ points $\{\bx_t\}$ satisfying
\[
     \bx_{t+1} = \bx_t + \eta_{\bx_1,\dots,\bx_t} \cdot (\nabla f(\bx_t)+\bxi_t), \quad t\in[T-1],
\]
where $\bxi_t$ are some random noise vectors. We further assume that the step sizes $\eta_{\bx_1,\dots,\bx_t}$ are deterministic functions of the norms and inner products between $\bx_1,\dots,\bx_t, \nabla f(\bx_1)+\bxi_1,\dots, \nabla f(\bx_t)+\bxi_t$ and may also depend on the exact second-order information $\nabla^2 f(\bx_1),\dots, \nabla^2 f(\bx_t)$.
Then for any $\rho,\Delta,\sigma \in \reals_{++}$ there exists a function $f:\reals^{T}\mapsto \reals$ with $\rho$-Lipschitz Hessian, $\bx_1\in \reals^T$, and independent random variables $\bxi_t$ with $\E[\bxi_t]=0$ and $\E[\|\bxi_t\|^2]=\sigma^2$ such that $\forall t\in[T]$
    \begin{align*}
        & f(\bx_1) - f(\bx_t) \overset{\text{a.s.}}{\leq} \Delta,
        & \|\nabla f(\bx_t)\|^2 \overset{\text{a.s.}}{=} \bgamma,
    \end{align*}
where $\bgamma\in \reals^T$ is a vector that satisfies
\[
	\|\bgamma\|^2 = \frac{\sigma}{2}\left(\frac{\rho \Delta ^2}{ (T-1)^2}\right)^{1/3}.
\]
\end{theorem}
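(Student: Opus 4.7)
My plan is to build an adversarial construction analogous to the one used in the proof of \thmref{thm:aggregation_step}, but with the component functions replaced by ones whose second derivative is $\rho$-Lipschitz rather than bounded. As there, I would first treat the case of fixed step sizes $\eta_t$ and then observe that the general adaptive case reduces to it: the construction will ensure that both $\nabla f(\bx_t)$ and $\nabla^2 f(\bx_t)$ at the iterates are constants independent of the random signs of the noise, so the deterministic step-size rule acts on deterministic arguments throughout.

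Concretely, I take the same separable shape
\[
    f(\bx) = G \cdot \langle \bx, \be_1\rangle + \sum_{t=1}^{T-1} h_t(\langle \bx, \be_{t+1}\rangle),
\]
with $G > 0$ to be chosen, noise $\bxi_t = \pm \sigma \be_{t+1}$ with equal probability, and each $h_t : \reals \to \reals$ an even function with $\rho$-Lipschitz second derivative satisfying $h_t(0)=0$, $h_t'(0) = h_t'(\pm \eta_t \sigma) = 0$, and $h_t(\pm \eta_t \sigma) = \Theta(\rho(\eta_t \sigma)^3)$. A suitable $h_t$ is obtained by integrating twice a piecewise-linear odd profile for $h_t''$ whose slope magnitude is at most $\rho$, so the $\rho$-Lipschitz Hessian constraint is built in. Evenness makes $\nabla f(\bx_t)$ and the diagonal Hessian $\nabla^2 f(\bx_t)$ insensitive to the $\pm$ signs of past noise, which is what allows the adaptive step sizes, and hence the entire trajectory, to remain deterministic.

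Following the dynamics as in the proof of \thmref{thm:aggregation_step}, the iterate takes the form $\bx_{t+1} = (-G \sum_{k\leq t}\eta_k,\; \pm \eta_1 \sigma,\; \ldots,\; \pm \eta_t \sigma,\; 0,\ldots,0)^\top$, and the zero-derivative condition on $h_k$ yields $\nabla f(\bx_t) = G \be_1$ almost surely, giving $\|\nabla f(\bx_t)\|^2 = G^2$. The suboptimality at the $t$-th iterate then reads
\[
    f(\bx_1)-f(\bx_t) \;=\; G^2\sum_{k=1}^{t-1}\eta_k - \sum_{k=1}^{t-1} h_k(\eta_k \sigma) \;\leq\; G^2\sum_{k=1}^{t-1}\eta_k - c\,\rho \sigma^3 \sum_{k=1}^{t-1}\eta_k^3
\]
for an absolute constant $c>0$. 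Maximizing the right-hand side in each $\eta_k$ (the cubic penalty now replacing the quadratic penalty of the Lipschitz-gradient setting) yields $G^3 \lesssim \rho^{1/2}\sigma^{3/2}\Delta/(T-1)$, and hence $G^2 \asymp \sigma\bigl(\rho \Delta^2/(T-1)^2\bigr)^{1/3}$, which is the rate claimed in the theorem. Pinning down the stated constant reduces to solving this one-dimensional maximization in closed form, as was done in \thmref{thm:aggregation_step}.

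The remaining step is to ensure $\inf_\bx f(\bx) > -\infty$, since the linear term $G\langle\bx,\be_1\rangle$ is unbounded below while the iterates only reach $-G\sum_k\eta_k$ along $\be_1$. For this I would prove an analogue of \lemref{L:detbound} in the $\rho$-Lipschitz-Hessian category: bend the offending linear piece into a plateau well below the trajectory, via a cubic splice that keeps the second derivative $\rho$-Lipschitz while preserving all first- and second-order data at the finitely many iterates. The principal obstacle I anticipate is precisely the construction of this splice — matching derivatives of orders zero, one, and two at the transition points while controlling the third derivative globally — together with showing that it costs only an $O(G^2/\rho)$ correction in the lower bound on $\inf f$, so that the step-size optimization above still yields the stated rate. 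Once this analogue of the lemma is in place, the rest of the argument is a direct transcription of the template of \thmref{thm:aggregation_step}.
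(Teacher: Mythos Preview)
Your construction and the overall dynamics are exactly what the paper does: same separable function, same noise, same reduction from the cubic penalty to the rate $G^2 \asymp \sigma(\rho\Delta^2/(T-1)^2)^{1/3}$ via a one-variable maximization. The paper even writes down the explicit piecewise-cubic $h_t$ you are describing.

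The one place you diverge is your ``remaining step''. Read the theorem statement again: it only asserts $f(\bx_1)-f(\bx_t)\le\Delta$ for all $t\in[T]$, not $f(\bx_1)-\inf_\bx f(\bx)\le\Delta$. So there is no need to make $f$ bounded below, and no need for a Lipschitz-Hessian analogue of \lemref{L:detbound}. The paper explicitly flags this after the proof: the interpolation conditions that would be needed to control $\inf_\bx f$ in the Lipschitz-Hessian class are an open question, which is precisely why the theorem is stated in this weaker form. Your anticipated ``principal obstacle'' is therefore not an obstacle at all here --- you can simply drop that paragraph and the proof is complete.

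One minor point on the adaptive case: you argue that evenness of $h_t$ makes $\nabla^2 f(\bx_t)$ insensitive to the signs, which is correct and sufficient. The paper's construction happens to satisfy the slightly stronger property $h_t''(\pm\eta_t\sigma)=0$, so that the Hessian at every iterate is identically zero; either observation suffices to force the adaptive step sizes to be deterministic.
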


\begin{proof}
We proceed as in the proof of \thmref{thm:aggregation_step}, taking for $G$ the positive value that satisfies
\[
G^2 
    = \frac{3 \sigma}{32} \left(\frac{16^2 \rho \Delta ^2}{ (T-1)^2}\right)^{1/3}
    \geq \frac{\sigma}{2}\left(\frac{\rho \Delta ^2}{ (T-1)^2}\right)^{1/3}, \\
\]
and set $h_t:=h^{(2,\rho)}_{|\eta_t| \sigma}$, with $h^{(2,\rho)}_{b}$ defined by
(see \figref{fig:sfunc2}):
\begin{align*}
    & h^{(2,\rho)}_{b}(x) :=\\& \begin{cases}
        \frac{\rho}{6} |x|^3 &   |x| \leq b/4,\\
        \frac{\rho}{2} \left(\frac{b^3}{96}-\frac{b^2}{8}  |x|+\frac{b}{2}  x^2-\frac{1}{3}|x|^3 \right) &   \frac{b}{4} \leq |x| < \frac{3b}{4},\\
        \frac{\rho}{32} b^3 - \frac{\rho}{6} \left(b - |x| \right)^3 & \frac{3b}{4} \leq |x| < b, \\
        \frac{\rho}{32} b^3 & |x|\geq b.
    \end{cases}
\end{align*}
It is straightforward to verify that $h_t$ has $\rho$-Lipschitz Hessian, and as in the Lipschitz gradient case, we have
\begin{align*}
    & h_t(0)=0, \\
    & h_t(x)=h_t(-x), \quad \forall x\in \reals, \\
    \text{and}\quad & h_t'(0)=h_t'(\eta_t \sigma)=h_t'(-\eta_t \sigma)=0.
\end{align*}
Proceeding with the new values, we reach
\begin{align*}
    & f(\bx_1) - f(\bx_t) = \sum_{k=1}^{t-1} \left(G^2 \eta_k - h^{(2)}_{|\eta_k| \sigma}(|\eta_k| \sigma)\right) \\
    & = \sum_{k=1}^{t-1} \left( \frac{3 \sigma}{32}\left(\frac{16^2 \rho \Delta ^2}{ (T-1)^2}\right)^{1/3} \eta_k - \frac{\rho}{32} |\eta_k|^3 \sigma^3\right) \\
    & = \frac{1}{32} \sum_{k=1}^{t-1} \left(3 \left(\frac{16^2 \Delta ^2} {(T-1)^2}\right)^{1/3} (\rho^{1/3} \sigma \eta_k) - |\rho^{1/3} \eta_k \sigma|^3\right) \\
    & \leq (t-1) \frac{\Delta}{T-1} \leq \Delta,
\end{align*}
where the one before last inequality follows from the inequality $3 a x - |x|^3 \leq 2 a^{\frac{3}{2}}$.
    
Finally, note that $h''_t (\eta_t \sigma) = h''_t (-\eta_t \sigma) = 0$, and as a result, the second-order information of $f$ at all iterates is identically zero, thus the proof in the adaptive step-size case can proceed without change.
\end{proof}

\begin{figure}
    \centering
    \includegraphics[width=0.6\linewidth]{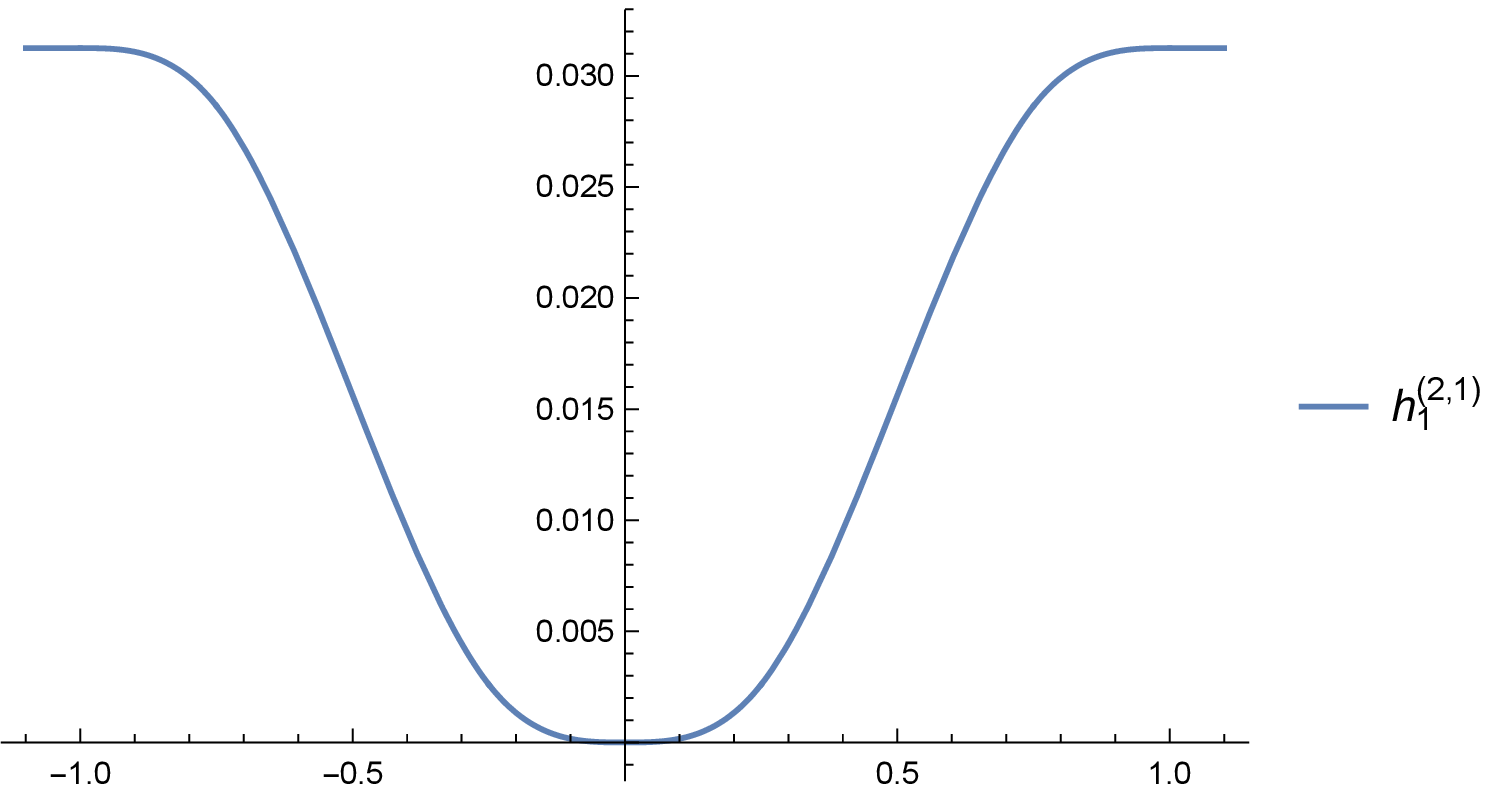}
    \caption{$h^{(2,1)}_{1}$.}
    \label{fig:sfunc2}
\end{figure}

Note that the main missing component needed for establishing a result bounding $f(\bx_1) - \inf_\bx f(\bx)$ in the Lipschitz-Hessian case is a set of necessary and sufficient interpolation conditions for Lipschitz-Hessian functions (as in the case of Lipschitz-gradient, \thmref{T:nonconvex_interpolation} in the appendix). The existence of such conditions remains an open question.

\section{Lower bounds in the convex quadratic case}\label{sec:convex}

In this section, we continue our analysis of the SGD method, showing that even for convex, quadratic functions in moderate dimensions and a standard Gaussian noise, SGD cannot achieve an iteration complexity better than $\Ocal(\epsilon^{-4})$ in order for any of its iterates to have gradient norm less than $\epsilon$. Note that for quadratic functions, the Hessian is constant, so the result still holds under additional 
Lipschitz assumptions on the Hessian and higher-order derivatives. We emphasize that the lower bounds only hold for the iterates themselves, without any aggregation step. Formally, we have the following:

\begin{theorem}\label{thm:sgdlow}
Consider the SGD method defined by
\[
     \bx_{t+1} = \bx_t + \eta_t \cdot (\nabla f(\bx_t)+\bxi_t),  \quad t\in [T-1],
\]
for some $T>1$ and suppose that the step sizes $\eta_1,\ldots,\eta_{T-1}$ are non-negative and satisfy 
at least one of the 
following conditions:
\begin{enumerate}
\item \emph{(Small step sizes)} $\max_{t\in [T-1]} \eta_t\leq 1/L$, and $\sum_{t=1}^{T-1}\eta_t \leq c\sqrt{T}/L$
for some constant $c$ (independent of the problem parameters). 
\item \emph{(Fixed step sizes)} $\eta_t$ is the same for all $t$.
\item \emph{(Polynomial decay schedule)} $\eta_t = \frac{a}{b+t^{\theta}}$ 
for 
some non-negative constants $a,b,\theta$ (independent of the problem parameters).
\end{enumerate}
Then for any $\delta\in (0,1)$, there exists a \emph{quadratic} function $f$ on $\reals^d$ (for any $d\geq d_0$ with $d_0=\Ocal(\log(T/\delta)\sigma^2T/(L^2\Delta))$ with $L$-Lipschitz gradients, 
and $\bx_1$ for which $f(\bx_1)-\inf_{\bx}f(\bx)\leq \Delta$, such that if $\bxi_t$ 
has a Gaussian distribution 
$\Ncal(\mathbf{0},\frac{\sigma^2}{d}I_d)$, with probability at least $1-\delta$
\[
\min_{t\in [T-1]}\norm{\nabla f(\bx_t)}^2 \geq c_0 \frac{\min\{L \Delta, \sigma^2\}}{\sqrt{T}},
\]
where $c_0$ is a positive constant depending only on the constants in the conditions stated above.
\end{theorem}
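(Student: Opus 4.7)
Plan: My construction will be an isotropic convex quadratic $f(\bx)=\tfrac{\alpha}{2}\|\bx\|^2$ with curvature $\alpha\in(0,L]$ to be chosen as a function of the step-size schedule, initialized at $\bx_1=\sqrt{2\Delta/\alpha}\,\be_1$ (so $f(\bx_1)-\inf f=\Delta$ and the Hessian is $\alpha I\preceq LI$). Because both the Hessian and the noise covariance are scalar multiples of the identity, SGD decouples coordinate-wise and the iterate admits the explicit form $\bx_t=P_t\bx_1+\bn_t$ with
\[
P_t:=\prod_{s=1}^{t-1}(1-\eta_s\alpha),\qquad \bn_t\sim\mathcal{N}\!\left(\mathbf{0},\,\tfrac{\sigma^2}{d}V_t\,I_d\right),\qquad V_t:=\sum_{s=1}^{t-1}\eta_s^2\prod_{r=s+1}^{t-1}(1-\eta_r\alpha)^2,
\]
and $\|\nabla f(\bx_t)\|^2=\alpha^2\|\bx_t\|^2$.

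First I would establish a high-dimensional concentration estimate: a $\chi^2_d$ lower-tail bound gives $\|\bn_t\|^2\geq\tfrac{1}{2}\sigma^2V_t$ with exponentially small failure probability, while Cauchy--Schwarz plus a Gaussian tail bound controls the cross-term $\langle P_t\bx_1,\bn_t\rangle$. A union bound over $t\in[T-1]$, together with a uniform estimate on $V_t$ derived from the step-size condition, shows that as soon as $d\geq d_0=\Theta(\log(T/\delta)\,\sigma^2T/(L^2\Delta))$, with probability at least $1-\delta$ one has, simultaneously for every $t\in[T-1]$,
\[
\|\nabla f(\bx_t)\|^2\;\geq\;R_t\;:=\;\alpha\Delta\,P_t^2+\tfrac{\sigma^2}{2}\alpha^2V_t.
\]
The quantity $R_t$ obeys the convenient scalar recursion $R_{t+1}=(1-\gamma_t)^2R_t+\tfrac{\sigma^2}{2}\gamma_t^2$ with $R_1=\alpha\Delta$ and $\gamma_t:=\eta_t\alpha$, which makes it a one-dimensional object I can analyze cleanly.

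The core analytical step is choosing $\alpha\in(0,L]$ per case so that $\min_tR_t\geq c_0\min(L\Delta,\sigma^2)/\sqrt{T}$. For \emph{case~1} I would take $\alpha=c_1\min(L,\sigma^2/\Delta)/\sqrt{T}$; the hypothesis $\sum\eta_t\leq c\sqrt{T}/L$ then forces $\sum_t\gamma_t=\alpha\sum_t\eta_t\leq c_1c$ to be an absolute constant, so $P_t^2\geq e^{-O(1)}$ uniformly and the signal alone yields $R_t\geq c'\alpha\Delta\gtrsim\min(L\Delta,\sigma^2)/\sqrt{T}$. For \emph{case~2} (fixed $\eta$) I would use a two-regime choice pivoting on a threshold $\eta^\star\sim 1/(\sqrt{T}\min(L,\sigma^2/\Delta))$: for $\eta\leq\eta^\star$ take $\alpha=\min(L,1/(4\eta T))$, which keeps $\gamma\leq 1/(4T)$ and $P_t^2\geq 1/2$, so $R_t\geq\alpha\Delta/2\gtrsim\Delta/(\eta T)\gtrsim\min(L\Delta,\sigma^2)/\sqrt{T}$; for $\eta>\eta^\star$ take $\alpha=L$, and the recursion's fixed point $\sigma^2\gamma/(4-2\gamma)\geq\sigma^2\eta L/4$ already exceeds the target. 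For \emph{case~3} the same two-regime idea applies, reducing to case~1 when $\theta>1$ (so $\sum\eta_t=O(1)$) and mimicking case~2 otherwise, using $\sum\eta_t\asymp aT^{1-\theta}/(1-\theta)$ for $\theta<1$ and $\sum\eta_t\asymp a\log T$ for $\theta=1$.

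The main obstacle I expect is the case~2 intermediate regime, where the naive choice $\alpha=L$ lets $R_t$ decay to as low as $\sigma^2\log T/T$, strictly smaller than the required $\sigma^2/\sqrt{T}$; shrinking $\alpha$ well below $L$ in this window (so that the effective contraction $(1-\gamma_t)^2$ barely moves $R_t$ away from $R_1$) is what saves the argument. The key algebraic lever, used throughout, is the monotone lower bound $R_{t+1}\geq(1-\gamma_t)^2R_t$, which preserves a constant fraction of $R_1=\alpha\Delta$ whenever $\sum\gamma_t$ is bounded; the case~1 recipe guarantees this outright, and the case~2 and case~3 step-dependent choices of $\alpha$ engineer the same uniform control. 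The delicate part of the casework is making the three sub-arguments close with a \emph{single} constant $c_0$ independent of $L,\Delta,\sigma,T$ (but allowed to depend on the schedule constants $c,a,b,\theta$).
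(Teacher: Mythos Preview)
Your approach is correct and closely parallels the paper's, but with a genuine unification. The paper splits the argument into two separate constructions: Proposition~\ref{prop:distance} uses a \emph{rank-one} quadratic $f(\bx)=\frac{1}{4\max\{1/L,\sum_t\eta_t\}}\langle\bx,\be_1\rangle^2$ (so the dynamics is effectively scalar, and the noise on that single coordinate has variance $\sigma^2/d$, which is killed by taking $d$ large) for the ``signal-dominated'' regime, while Proposition~\ref{prop:noise} uses the isotropic $f(\bx)=\tfrac{L}{2}\|\bx\|^2$ for the ``noise-floor'' regime. The theorem then picks between the two propositions depending on the step-size schedule, with the same pivot logic you describe (e.g., in the fixed-step case the paper pivots at $\eta=1/(L\sqrt{T})$ and $\eta=1/L$). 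You instead use a single isotropic family $\tfrac{\alpha}{2}\|\bx\|^2$ and move the curvature $\alpha\in(0,L]$, which lets you read off both the signal and noise contributions from the one recursion $R_{t+1}=(1-\gamma_t)^2R_t+\tfrac{\sigma^2}{2}\gamma_t^2$; this is cleaner bookkeeping, and since $\bx_t$ is Gaussian with mean $P_t\bx_1$ and isotropic covariance, the paper's Lemma~\ref{lem:gausscon} gives the needed concentration $\|\bx_t\|^2\geq\tfrac12(P_t^2\|\bx_1\|^2+\sigma^2V_t)$ directly---you actually only need $d\gtrsim\log(T/\delta)$ here, not the larger $d_0$ in the statement (that larger $d_0$ is an artifact of the paper's rank-one construction). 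Your worry about an ``intermediate regime'' giving $\sigma^2\log T/T$ with $\alpha=L$ is unfounded: for fixed $\gamma\in(0,2)$ the sequence $R_t$ is monotone between $R_1=L\Delta$ and the fixed point $\sigma^2\gamma/(4-2\gamma)$, so $\min_tR_t\geq\min\{L\Delta,\sigma^2\eta L/4\}$, which already suffices once $\eta\gtrsim1/(L\sqrt T)$; the small-$\alpha$ choice is only needed for $\eta$ below that threshold, exactly as in the paper's split between the two propositions.
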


We note that all standard analyses for (non-adaptive) SGD methods rely on one of these step-size strategies. Moreover, the proof technique can plausibly be extended to other step sizes. Thus, the theorem provides a strong indication that SGD (without an aggregation step) cannot achieve a better iteration complexity, at least when the optimality criterion is $\min_t\norm{\nabla f(\bx_t)}$, even for convex quadratic functions.

The proof is based on the following two more technical propositions, which provide lower bounds depending on the step sizes and the problem parameters: 

\begin{proposition}\label{prop:distance}
	For any $L>0,\Delta>0,T>1$ and $\delta\in (0,1)$, there exists a convex quadratic function $f$ on $\reals^d$ (for any $d\geq d_0$ where $d_0=\Ocal(\log(T/\delta)\sigma^2T/(L^2\Delta))$) with $L$-Lipschitz gradient, and an $\bx_1$ such that $f(\bx_1)-\inf_{\bx}f(\bx)\leq \Delta$, such that if we initialize SGD at $\bx_1$ with Gaussian noise $\Ncal(\mathbf{0},\frac{\sigma^2}{d}I_d)$ and use step sizes $\eta_1,\ldots,\eta_{T-1}$ in $[0,1/L]$, then with probability at least $1-\delta$,
\begin{align*}
	\min_{t\in [T]}\norm{\nabla 
	f(\bx_t)}^2 \geq \frac{\Delta}{25 \max\left\{1/L,\sum_{t=1}^{T-1}\eta_t\right\}}.
\end{align*}
\end{proposition}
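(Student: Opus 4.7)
The plan is to embed a rank-one convex quadratic in high dimensions, reducing SGD to a scalar AR(1)-type recursion whose signal stays near its starting value and whose noise is controlled by Gaussian concentration. Fix a suitable small absolute constant $c_1$ (one can take $c_1 = 1/12$) and set $\mu := c_1/\max\{1/L,\sum_{t=1}^{T-1}\eta_t\}$, which guarantees $\mu\leq L$ and $\sum_t\eta_t\mu\leq c_1\leq 1/4$. Define $f(\bx) := (\mu/2)\langle\bx,\be_1\rangle^2$ on $\reals^d$, a convex quadratic with $\mu$-Lipschitz (hence $L$-Lipschitz) gradient and $\inf f = 0$, and initialize at $\bx_1 := y_1\be_1$ with $y_1 := \sqrt{2\Delta/\mu}$, so that $f(\bx_1)-\inf f = \Delta$.

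Because $\nabla f(\bx) = \mu\langle\bx,\be_1\rangle\be_1$, only the scalar $y_t := \langle\bx_t,\be_1\rangle$ governs $\|\nabla f(\bx_t)\|$, and the SGD iteration collapses to $y_{t+1} = (1-\eta_t\mu)y_t - \eta_t Z_t$ with $Z_t := \langle\bxi_t,\be_1\rangle \sim \Ncal(0,\sigma^2/d)$ independent across $t$. Unrolling gives $y_t = S_t - N_t$, with deterministic signal $S_t := y_1\prod_{s<t}(1-\eta_s\mu)$ and Gaussian noise $N_t := \sum_{s<t}\eta_s Z_s\prod_{s<s'<t}(1-\eta_{s'}\mu)$. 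Bernoulli's inequality yields $S_t\geq (1-c_1)y_1$, and since $|1-\eta_{s'}\mu|\leq 1$ and $\eta_s\leq 1/L$, we get $\var(N_t)\leq (\sigma^2/d)\sum_{s<t}\eta_s^2\leq \sigma^2(T-1)/(dL^2)$.

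A Gaussian tail bound combined with a union bound over $t\in[T]$ then shows that, provided $d\geq d_0 = \Ocal(\log(T/\delta)\sigma^2 T/(L^2\Delta))$, one has $|N_t|\leq y_1/4$ for all $t\in[T]$ simultaneously with probability at least $1-\delta$; on this event $|y_t|\geq y_1/2$, so $\|\nabla f(\bx_t)\|^2 = \mu^2 y_t^2\geq \mu\Delta/2 = c_1\Delta/(2\max\{1/L,\sum_t\eta_t\})$, which gives the claimed $1/25$ for $c_1 = 1/12$. The main obstacle is precisely this uniform-in-$t$ noise bound: the dimension threshold $d_0$ arises from balancing the Gaussian tail (with union-bound slack $\log(T/\delta)$) against the worst-case variance $\sigma^2(T-1)/(dL^2)$ relative to the target deviation $y_1^2\geq 2\Delta/L$. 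Everything else is a routine check of the $L$-Lipschitz gradient, the identity $f(\bx_1)-\inf f = \Delta$, and the choice of $\mu$.
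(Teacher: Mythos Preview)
Your proposal is correct and follows essentially the same approach as the paper: a rank-one convex quadratic along $\be_1$ reduces SGD to a scalar recursion, the deterministic signal is bounded below by a constant fraction of $y_1$, and the Gaussian noise is handled by a variance bound $\sigma^2(T-1)/(dL^2)$ together with a union bound over $t\in[T]$. The only differences from the paper are cosmetic---the paper takes the quadratic coefficient $1/(2M)$ (rather than your $1/(12M)$) and lower-bounds the product $\prod_j(1-\eta_j\mu)$ via $1-z/2\geq 2^{-z}$ instead of the Bernoulli-type inequality $\prod(1-a_s)\geq 1-\sum a_s$.
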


\begin{proposition}\label{prop:noise}
	For any $L>0,\Delta>0,T>1$ and $\delta\in (0,1)$, there exists a convex quadratic function $f$ on $\reals^d$ (for any $d\geq d_0$ where $d_0=\Ocal(\log(T/\delta))$) with $L$-Lipschitz gradient, and a vector $\bx_1$ such that $f(\bx_1)-\inf_{\bx}f(\bx)\leq \Delta$, such that if we initialize SGD at $\bx_1$ with Gaussian noise $\Ncal(\mathbf{0},\frac{\sigma^2}{d}I_d)$, then the following holds with probability at least $1-\delta$:
	\begin{itemize}
		\item If for all $t$, $\eta_t=\eta$ with $\eta\in [0,1/L)$, then 
		$
		\min_{t\in [T]}\norm{\nabla f(\bx_t)}^2 \geq \frac{L}{2} \min\{\Delta,\frac{\eta \sigma^2}{2-L\eta}\}.
		$
		\item If for all $t$, $\eta_t \geq c/L$ for some constant $c>0$ then $\min_{t\in [T]}\norm{\nabla f(\bx_t)}^2 \geq \frac{\sigma^2 c^2}{2}$.
		\item If $\eta_t = \frac{a}{L(b+t^\theta)}$ for some positive constants $a>0,b\geq 0$ and
		$\theta\in (0,1)$, then 
		\[
		\min_{t\in [T]}\norm{\nabla f(\bx_t)}^2 \geq c_{a,b,\theta} \sigma^2 \min\{1, L \eta_T\},
		\]
		where $c_{a,b,\theta}$ is a constant dependent only on $a,b,\theta$.
	\end{itemize}
\end{proposition}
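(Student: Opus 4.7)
The plan is to take the simplest convex quadratic $f(\bx) = \tfrac{L}{2}\norm{\bx}^2$ with $\bx_1$ chosen so that $\norm{\bx_1}^2 = 2\Delta/L$ (hence $f(\bx_1)-\inf f = \Delta$ and the gradient is $L$-Lipschitz). On this objective the SGD recursion is linear, $\bx_{t+1} = (1-L\eta_t)\bx_t - \eta_t\bxi_t$, and unrolls to $\bx_t = a_t + \bv_t$ with
\[
    a_t := \Bigl(\prod_{k=1}^{t-1}(1-L\eta_k)\Bigr)\bx_1,
    \qquad
    \bv_t := -\sum_{k=1}^{t-1}\eta_k\Bigl(\prod_{j=k+1}^{t-1}(1-L\eta_j)\Bigr)\bxi_k.
\]
Under the Gaussian hypothesis on $\bxi_k$, $\bv_t \sim \Ncal(\mathbf{0},(\tau_t^2/d)I_d)$ where $\tau_t^2 := \sigma^2\sum_{k=1}^{t-1}\eta_k^2\prod_{j=k+1}^{t-1}(1-L\eta_j)^2$, and crucially both $a_t$ and $\tau_t$ are deterministic.

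The key analytic step is a pointwise lower bound on $\norm{\bx_t}^2 = \norm{a_t}^2 + 2\inner{a_t,\bv_t} + \norm{\bv_t}^2$. Since $\hat a_t := a_t/\norm{a_t}$ is a deterministic unit vector, $\inner{\hat a_t,\bv_t} \sim \mathcal N(0,\tau_t^2/d)$, so a Gaussian tail bound yields $|\inner{a_t,\bv_t}| \leq \norm{a_t}\tau_t\sqrt{C\log(T/\delta)/d}$ with probability $\geq 1-\delta/(2T)$, and $\chi^2_d$-concentration yields $\norm{\bv_t}^2 \geq \tau_t^2/2$ with probability $\geq 1-\delta/(2T)$. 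For $d \geq c_0\log(T/\delta)$, AM--GM absorbs the cross term into $\tfrac{1}{2}\norm{a_t}^2 + \tfrac{1}{4}\tau_t^2$, and a union bound over $t\in[T]$ gives $\norm{\bx_t}^2 \geq \tfrac{1}{4}(\norm{a_t}^2 + \tau_t^2)$ for all $t$ simultaneously with probability $\geq 1-\delta$. It then remains to lower bound $\norm{a_t}^2+\tau_t^2$ in each step-size regime.

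For the constant step size $\eta \in [0,1/L)$, setting $r:=1-L\eta$ gives the explicit formulas $\norm{a_t}^2 = r^{2(t-1)}\cdot 2\Delta/L$ and $\tau_t^2 = \sigma^2\eta(1-r^{2(t-1)})/(L(2-L\eta))$, so their sum is a convex combination of the initial value $2\Delta/L$ and the stationary variance $\sigma^2\eta/(L(2-L\eta))$, hence always at least their minimum; multiplying by $L^2/4$ delivers the first bound up to an absolute constant. For $\eta_t \geq c/L$, I would instead condition on $\bx_{t-1}$ and observe that $\bx_t = (1-L\eta_{t-1})\bx_{t-1}-\eta_{t-1}\bxi_{t-1}$ is Gaussian with covariance $(\eta_{t-1}^2\sigma^2/d)I_d$; the same projection-onto-a-direction-orthogonal-to-the-conditional-mean argument as above gives $\norm{\bx_t}^2 \geq \tfrac{1}{2}\eta_{t-1}^2\sigma^2 \geq c^2\sigma^2/(2L^2)$ for $t\geq 2$, and $t=1$ is covered by $\norm{\nabla f(\bx_1)}^2 = 2L\Delta$.

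The main obstacle is the polynomial-decay schedule $\eta_t = a/(L(b+t^\theta))$ with $\theta\in(0,1)$, where $\tau_t^2$ admits no clean closed form. The plan is to restrict the sum defining $\tau_t^2$ to a window $[k_0,t-1]$ of width $w$ chosen so that $\sum_{j=k_0+1}^{t-1}L\eta_j \leq 1$; by slow variation of $s\mapsto 1/(b+s^\theta)$ one has $\eta_j \asymp \eta_t$ throughout the window, so $w\asymp 1/(L\eta_t)$, and the damping factor $\prod_{j=k+1}^{t-1}(1-L\eta_j)^2 \geq e^{-2\sum_j L\eta_j}$ stays $\Omega(1)$ uniformly in $k$ on the window. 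Summing the window contributions then gives $\tau_t^2 \gtrsim w\cdot\eta_t^2\cdot\sigma^2 \asymp \sigma^2\eta_t/L$ (with constants depending on $a,b,\theta$), and multiplying by $L^2/4$ delivers $\norm{\nabla f(\bx_t)}^2 \gtrsim \sigma^2 L\eta_t$; when $L\eta_t \geq 1$ the previous case applies to give $\gtrsim \sigma^2$, producing the claimed $\min\{1,L\eta_T\}$ form. The delicate part is tuning the window so that the slow-variation estimate and the $\Omega(1)$ damping-factor bound are valid simultaneously, which is precisely where $c_{a,b,\theta}$ acquires its dependence on all three parameters.
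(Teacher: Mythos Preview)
Your proposal is correct and follows essentially the same route as the paper: the same quadratic $f(\bx)=\tfrac{L}{2}\norm{\bx}^2$, the same Gaussian-tail plus $\chi^2$ concentration to reduce matters to lower-bounding the deterministic quantity $\norm{a_t}^2+\tau_t^2$ (the paper packages this step as a standalone lemma and obtains the slightly sharper factor $\tfrac12$ instead of your $\tfrac14$), and the same window-of-width $\asymp 1/(L\eta_t)$ argument for the polynomial-decay schedule. One small slip to fix when you write it up: the inequality $\prod_j(1-L\eta_j)^2 \geq e^{-2\sum_j L\eta_j}$ goes the wrong way (since $1-x\leq e^{-x}$); you need either a larger constant in the exponent together with $L\eta_j$ bounded away from $1$, or---as the paper does---simply lower-bound each factor on the window by $(1-L\eta_{\lfloor t/2\rfloor})^2$ and observe that $(1-L\eta_{\lfloor t/2\rfloor})^{2\lfloor 1/(L\eta_t)\rfloor}$ is bounded below by a constant depending on $a,b,\theta$ once $t$ exceeds a threshold $\tau_{a,b,\theta}$.
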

The proofs of these propositions appear in \appref{sec:proofs}. Together, Propositions \ref{prop:distance} and \ref{prop:noise} imply the theorem:
\begin{proof}[Proof of \thmref{thm:sgdlow}]
The theorem, under the first condition, is an immediate corollary of \propref{prop:distance}. Indeed,
\begin{align*}
    \min_{t\in [T]} \norm{\nabla 
	f(\bx_t)}^2
	\geq \frac{\Delta}{25\max\left\{1/L,\sum_{t=1}^{T-1}\eta_t\right\}}& \\
	\geq \frac{L \Delta}{25\max\left\{1,c \sqrt{T}\right\}} 
	\geq \frac{\min\{L \Delta, \sigma^2\}}{25\max\{1, c\} \sqrt{T}}&.
\end{align*}
As to the second condition, let us consider three cases. First, if $\eta_t=\eta$ is at most $T^{-1/2}/L$, then $\sum_{t=1}^{T-1}\eta_t< \sqrt{T}/L$ and the result again follows from \propref{prop:distance} as in the previous case. Next, suppose $T^{-1/2}/L \leq \eta < 1/L$, then the result follows from \propref{prop:noise}:
\begin{align*}
    & \min_{t\in [T]} \norm{\nabla f(\bx_t)}^2 \geq \frac{L}{2} \min\{\Delta, \frac{ \eta \sigma^2}{2-L\eta}\}
    \\&\geq \frac{1}{2} \min\{L \Delta, \frac{\sigma^2 T^{-1/2}}{2-T^{-1/2}}\}
    \geq \frac{1}{2} \frac{\min\{L \Delta,\sigma^2\}}{2\sqrt{T}-1}.
\end{align*}
The last case for this condition is $1/L \leq \eta$, which does not converge due to the second case in \propref{prop:noise}.

As to the third condition (namely $\eta_t = \frac{a}{b+t^{\theta}}$), we can assume without loss of generality that 
$a>0$ (otherwise we are back to the first condition in the theorem, and nothing is left to prove) and that $\theta \in (0,1)$ (since if 
$\theta=0$, we are back to the second condition in the theorem and if $\theta \geq 1$ we are back to the first condition in the theorem). The result then follows from the third case of Proposition \ref{prop:noise}.
\end{proof}


\bibliographystyle{plain}
\bibliography{bib}

\appendix

\section{Additional Proofs}\label{sec:proofs}

\subsection{Bounded smooth interpolation}

We start by recalling a recent and fundamental theorem which provide necessary and sufficient conditions under which a set $\{\bx_t, \bg_t, f_t\}_{i\in [T]}$ can be \emph{interpolated} (or \emph{extended} using the terminology from the classical text~\cite{whitney1934analytic}) by a convex function $f$ with $L$-Lipschitz gradient such that $f(\bx_t) =f_t$ and $\nabla f(\bx_t) = \bg_t$ for all $t\in [T]$.
The theorem was established in~\cite{taylor2015smooth} and also independently in~\cite{azagra2017extension} for a more general setting in Hilbert spaces. 
\begin{theorem}\label{T:convex_interpolation}
Let $L>0$, $d\in \mathbb{N}$ and suppose $\{(\bx_t, \bg_t, f_t)\}_{t\in [T]}$ is some finite subset of $\reals^d\times\reals^d\times \reals$. Then there exists a convex function $f$ with $L$-Lipschitz gradient that satisfies $f(\bx_t) = f_t$ and $\nabla f(\bx_t) = \bg_t$ for all $t\in [T]$ if and only if
\begin{equation}\label{E:convex_interpolation_conditions}
	    \frac{1}{2L} \|\bg_i - \bg_j\|^2 \leq f_i - f_j - \langle \bg_j, \bx_i - \bx_j\rangle, \quad \forall i,j \in [T].
\end{equation}
\end{theorem}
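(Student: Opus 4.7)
The plan is to establish the two directions of the equivalence separately. Necessity of the inequalities~\eqref{E:convex_interpolation_conditions} is an easy consequence of the descent lemma, while sufficiency requires exhibiting an explicit interpolant, which I will construct via the Fenchel conjugate.

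For necessity, fix indices $i,j\in[T]$ and introduce the auxiliary function $\phi(\bx) := f(\bx) - \langle \bg_j, \bx\rangle$, which inherits convexity and $L$-smoothness from $f$ and satisfies $\nabla \phi(\bx_j) = \mathbf{0}$, so $\bx_j$ is its global minimizer. Applying the descent lemma to $\phi$ with step $1/L$ at $\bx_i$ gives $\phi(\bx_j)\le \phi\bigl(\bx_i - \tfrac{1}{L}\nabla\phi(\bx_i)\bigr) \le \phi(\bx_i) - \tfrac{1}{2L}\|\bg_i-\bg_j\|^2$, and substituting the definition of $\phi$ recovers exactly~\eqref{E:convex_interpolation_conditions}.

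For sufficiency, I exploit the standard duality: a convex function has $L$-Lipschitz gradient if and only if its Fenchel conjugate is $(1/L)$-strongly convex, with $\nabla f(\bx)=\bg$ equivalent to $\bx\in\partial f^*(\bg)$, and with the reciprocity $f(\bx)+f^*(\bg)=\langle \bg,\bx\rangle$ whenever these are conjugate. Setting $f_i^* := \langle \bg_i, \bx_i\rangle - f_i$, it suffices to produce a $(1/L)$-strongly convex function $f^*$ on $\reals^d$ with $f^*(\bg_i)=f_i^*$ and $\bx_i\in\partial f^*(\bg_i)$; the desired $f$ will then be $f^{**}$. I propose the explicit candidate
\[
    f^*(\bg) := \max_{i\in[T]}\Bigl\{ f_i^* + \langle \bx_i, \bg-\bg_i\rangle + \tfrac{1}{2L}\|\bg-\bg_i\|^2\Bigr\}.
\]
Each branch is a shifted parabola, hence $(1/L)$-strongly convex; the pointwise max is therefore $(1/L)$-strongly convex. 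Since the gradient of the $i$-th branch at $\bg_i$ equals $\bx_i$, as soon as we know that the $i$-th branch attains the maximum at $\bg_i$, we have $f^*(\bg_i)=f_i^*$ and $\bx_i\in\partial f^*(\bg_i)$, and Fenchel reciprocity then yields $\nabla f(\bx_i)=\bg_i$ and $f(\bx_i)=\langle \bg_i,\bx_i\rangle - f_i^* = f_i$.

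The one real obstacle is verifying that the $i$-th branch does realize the maximum at $\bg_i$, i.e., that for every $j\ne i$,
\[
    f_j^* + \langle \bx_j, \bg_i-\bg_j\rangle + \tfrac{1}{2L}\|\bg_i-\bg_j\|^2 \;\le\; f_i^*.
\]
Plugging in $f_i^* = \langle \bg_i,\bx_i\rangle - f_i$ and $f_j^* = \langle \bg_j,\bx_j\rangle - f_j$ and rearranging, this reduces to $f_j - f_i - \langle \bg_i, \bx_j - \bx_i\rangle \ge \tfrac{1}{2L}\|\bg_i-\bg_j\|^2$, which is precisely~\eqref{E:convex_interpolation_conditions} with the roles of $i$ and $j$ swapped. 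Once this algebraic identity is in hand the construction is complete, since the $(1/L)$-strong convexity of $f^*$ delivers $L$-smoothness of $f:=f^{**}$, and the interpolation data are recovered exactly from the reciprocity relations above.
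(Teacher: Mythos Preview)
Your argument is correct. The necessity direction via the shifted function $\phi$ is the standard ``co-coercivity'' proof, and for sufficiency your max-of-paraboloids conjugate construction works: $f^*$ is closed, proper, and $(1/L)$-strongly convex (since $f^*(\bg)-\tfrac{1}{2L}\|\bg\|^2$ is a maximum of affine functions), so $f:=(f^*)^*$ is finite, convex, and differentiable with $L$-Lipschitz gradient; the activity check you reduce to~\eqref{E:convex_interpolation_conditions} with $i,j$ swapped is exactly right, and Fenchel--Young equality then recovers the data.

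Regarding comparison with the paper: the paper does \emph{not} actually prove Theorem~\ref{T:convex_interpolation}; it is stated as a known result with citations to Taylor--Hendrickx--Glineur and Azagra--Mudarra. What the paper does prove is the strengthened Theorem~\ref{T:nonnegative_interpolation}, and there it uses a different explicit interpolant, the ``primal interpolation function'' of Drori,
\[
W(\by)=\min_{\balpha\in\Delta_T}\Bigl[\tfrac{L}{2}\bigl\|\by-\textstyle\sum_t\alpha_t(\bx_t-\tfrac{1}{L}\bg_t)\bigr\|^2+\textstyle\sum_t\alpha_t\bigl(f_t-\tfrac{1}{2L}\|\bg_t\|^2\bigr)\Bigr].
\]
This is in fact the Fenchel conjugate of your $f^*$: writing $f^*(\bg)=\tfrac{1}{2L}\|\bg\|^2+\max_t\{\langle \bx_t-\tfrac{1}{L}\bg_t,\bg\rangle - f_t+\tfrac{1}{2L}\|\bg_t\|^2\}$ and taking conjugates (infimal convolution of $\tfrac{L}{2}\|\cdot\|^2$ with the conjugate of a polyhedral max) yields exactly $W$. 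So the two constructions produce the same interpolant; your dual route makes the $L$-smoothness immediate from strong convexity of $f^*$, while the primal form $W$ makes the global lower bound $\min_t(f_t-\tfrac{1}{2L}\|\bg_t\|^2)$ transparent, which is what the paper needs for Theorem~\ref{T:nonnegative_interpolation}.
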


A similar result that provides necessary and sufficient conditions for non-convex interpolation is also known.
\begin{theorem}[{\cite[Theorem~3.10]{taylor2015exact}}]\label{T:nonconvex_interpolation}
Let $L>0$, $d\in \mathbb{N}$ and suppose $\{(\bx_t, \bg_t, f_t)\}_{t\in [T]}$ is some finite subset of $\reals^d\times\reals^d\times \reals$. Then there exists an function $f$ with $L$-Lipschitz gradient that satisfies $f(\bx_t) = f_t$ and $\nabla f(\bx_t) = \bg_t$ for all $t\in [T]$ if and only if
\begin{equation}\label{E:nonconvex_interpolation_conditions}
	    \frac{1}{2L} \|\bg_i - \bg_j\|^2 - \frac{L}{4} \|\bx_i - \bx_j - \frac{1}{L}(\bg_i - \bg_j)\|^2 \leq f_i - f_j - \langle \bg_j, \bx_i - \bx_j\rangle, \quad \forall i,j \in [T].
\end{equation}
\end{theorem}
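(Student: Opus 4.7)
The plan is to reduce Theorem~\ref{T:nonconvex_interpolation} to the convex interpolation result already quoted as Theorem~\ref{T:convex_interpolation}, via the standard quadratic-translation trick. I would use the bijection $f\mapsto \bar f := f + \tfrac{L}{2}\|\cdot\|^2$, which maps functions with $L$-Lipschitz gradient to convex functions with $2L$-Lipschitz gradient (at the Hessian level this shifts eigenvalues from $[-L,L]$ to $[0,2L]$, an equivalence that also extends past the $C^2$ case by standard mollification arguments). Under this correspondence, a data triple $(\bx_t,\bg_t,f_t)$ for $f$ becomes a data triple $(\bx_t,\bar\bg_t,\bar f_t)$ for $\bar f$ with $\bar\bg_t := \bg_t + L\bx_t$ and $\bar f_t := f_t + \tfrac{L}{2}\|\bx_t\|^2$.

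For the necessity direction, if $f$ has $L$-Lipschitz gradient and interpolates the original data, then $\bar f$ is convex with $2L$-Lipschitz gradient and interpolates the translated data, so Theorem~\ref{T:convex_interpolation} applied at smoothness parameter $2L$ yields $\tfrac{1}{4L}\|\bar\bg_i-\bar\bg_j\|^2 \leq \bar f_i - \bar f_j - \langle \bar\bg_j, \bx_i-\bx_j\rangle$ for every $i,j$. The next step is to expand this back in the original variables: using $\bar\bg_i-\bar\bg_j = (\bg_i-\bg_j) + L(\bx_i-\bx_j)$ together with the identity $\bar f_i-\bar f_j - \langle \bar\bg_j, \bx_i-\bx_j\rangle = (f_i-f_j) - \langle \bg_j, \bx_i-\bx_j\rangle + \tfrac{L}{2}\|\bx_i-\bx_j\|^2$, the $\tfrac{L}{2}\|\bx_i-\bx_j\|^2$ term on the right cancels part of the expansion of the left side and produces exactly the inequality~\eqref{E:nonconvex_interpolation_conditions}. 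For sufficiency I would run the same computation in reverse: starting from~\eqref{E:nonconvex_interpolation_conditions}, the translated data is seen to satisfy the convex interpolation condition at parameter $2L$, so Theorem~\ref{T:convex_interpolation} produces a convex $\bar f$ with $2L$-Lipschitz gradient interpolating it, and then $f := \bar f - \tfrac{L}{2}\|\cdot\|^2$ interpolates the original data while having $L$-Lipschitz gradient by the same eigenvalue-shift argument in the opposite direction.

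The only real work is the algebraic verification that the two inequalities are equivalent under the translation, which is a direct expansion and essentially mechanical — the only subtlety is tracking the factors of $L$ and $\tfrac{1}{2}$. No deep analytic obstacle arises, because the class of $L$-smooth functions is affinely equivalent (via addition of $\tfrac{L}{2}\|\cdot\|^2$) to the class of convex $2L$-smooth functions, so the exact nonconvex interpolation characterization is essentially forced on us by the sharp convex one already in hand.
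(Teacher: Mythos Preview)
Your reduction is correct and is precisely the standard argument used in the cited reference \cite{taylor2015exact}; the paper itself does not give a proof of Theorem~\ref{T:nonconvex_interpolation}, only citing it, but it does invoke exactly this quadratic-shift correspondence (mapping $(\bx_t,\bg_t,f_t)$ to $(\bx_t,\bg_t+L\bx_t,f_t+\tfrac{L}{2}\|\bx_t\|^2)$ and working with a convex $2L$-smooth interpolant, then subtracting $\tfrac{L}{2}\|\cdot\|^2$) in the proof of the subsequent Theorem~\ref{T:nonnegative_interpolation}. One small remark: your claim that $f=\bar f-\tfrac{L}{2}\|\cdot\|^2$ has $L$-Lipschitz gradient whenever $\bar f$ is convex and $2L$-smooth is not immediate from Lipschitzness alone in the non-$C^2$ case---it requires cocoercivity of $\nabla\bar f$ (Baillon--Haddad), which gives $\|\nabla\bar f(x)-\nabla\bar f(y)-L(x-y)\|^2\le L^2\|x-y\|^2$ directly; the paper handles this by citing \cite[Lemma~3.9]{taylor2015exact}.
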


Here we strengthen the results of \thmref{T:convex_interpolation} and \thmref{T:nonconvex_interpolation}, showing that interpolation can be performed in such a way that the resulting function is bounded from below and attains its minimum value. The proof is based on an explicit construction of a convex interpolating function developed in~\cite{drori2017exact}.
This resolves an open question raised by~\cite{fefferman2017interpolation} for the case where the interpolation set is finite.

\begin{theorem}\label{T:nonnegative_interpolation}
Let $L>0$, $d\in \mathbb{N}$ and suppose $\{(\bx_t, \bg_t, f_t)\}_{t\in [T]}$ is some finite subset of $\reals^d\times\reals^d\times \reals$ that satisfies~\eqref{E:convex_interpolation_conditions} (alternatively, \eqref{E:nonconvex_interpolation_conditions}).
Then there exists a convex (alternatively, nonconvex) function $f$ with $L$-Lipschitz gradient that satisfies $f(\bx_t) = f_t$, $\nabla f(\bx_t) = \bg_t$ and in addition, setting $j \in \arg\min_{t\in [T]} f_t -\frac{1}{2L} \| \bg_t\|^2$, the function $f$ also satisfies
\[
    f^*:= \min_{x\in \reals^d} f(x) = f(\bx_j - \frac{1}{L} \bg_j) = f_j - \frac{1}{2L} \|\bg_j\|^2.
\]
\end{theorem}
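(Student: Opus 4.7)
The plan is to reduce the claim to Theorems~\ref{T:convex_interpolation} and~\ref{T:nonconvex_interpolation} by augmenting the given interpolation data with one extra triple placed exactly at the candidate minimum, and then invoking the corresponding interpolation theorem on this enlarged data set. Concretely, let $j \in \arg\min_{t\in[T]}(f_t - \tfrac{1}{2L}\|\bg_t\|^2)$ and define
\[
(\bar{\bx}, \bar{\bg}, \bar{f}) := \bigl(\bx_j - \tfrac{1}{L}\bg_j,\; \mathbf{0},\; f_j - \tfrac{1}{2L}\|\bg_j\|^2\bigr),
\]
and consider the augmented collection $S := \{(\bx_t, \bg_t, f_t)\}_{t\in[T]} \cup \{(\bar{\bx}, \bar{\bg}, \bar{f})\}$.

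The first technical step is to verify that $S$ still satisfies the interpolation inequalities of Theorem~\ref{T:convex_interpolation} (resp.~\ref{T:nonconvex_interpolation}). Only the inequalities involving the new triple need checking. For pairs $(i, \text{new})$, substituting $\bar{\bg} = \mathbf{0}$ collapses the condition to $\bar{f} \le f_i - \tfrac{1}{2L}\|\bg_i\|^2$, which is exactly the defining property of $j$. For pairs $(\text{new}, k)$, substituting $\bar{\bx} = \bx_j - \tfrac{1}{L}\bg_j$ and expanding, the cross term $\tfrac{1}{L}\langle \bg_k, \bg_j\rangle$ combines with $\tfrac{1}{2L}\|\bg_k\|^2 + \tfrac{1}{2L}\|\bg_j\|^2$ to yield exactly $\tfrac{1}{2L}\|\bg_k - \bg_j\|^2$, so the inequality reduces to the assumed interpolation condition for the original pair $(j, k)$. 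In the nonconvex case the same substitutions also turn the $-\tfrac{L}{4}\|\cdot\|^2$ correction into its counterpart for $(j, k)$, since $\bx_k - \bar{\bx} - \tfrac{1}{L}(\bg_k - \mathbf{0}) = \bx_k - \bx_j - \tfrac{1}{L}(\bg_k - \bg_j)$, so the same reduction works verbatim.

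Having verified the interpolation conditions, invoking Theorem~\ref{T:convex_interpolation} (resp.~\ref{T:nonconvex_interpolation}) yields a convex (resp.~nonconvex) $L$-smooth $f$ that interpolates all of $S$; in particular $f(\bar{\bx}) = \bar{f}$ and $\nabla f(\bar{\bx}) = \mathbf{0}$. In the convex case this already closes the argument, since a convex function with vanishing gradient at a point attains its global minimum there, giving $f^* = \bar{f}$ as claimed. The main obstacle is the nonconvex case, where the existence of a critical point is not enough: the descent lemma only gives $f(y) \ge \bar{f} - \tfrac{L}{2}\|y - \bar{\bx}\|^2$. To handle this I would \emph{not} merely invoke the existence statement of Theorem~\ref{T:nonconvex_interpolation}, but use the explicit closed-form interpolant from~\cite{drori2017exact} (and its nonconvex counterpart), which expresses the interpolating function as a pointwise minimum of quadratic pieces anchored at the data points. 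Because the triple $(\bar{\bx}, \mathbf{0}, \bar{f})$ carries the smallest value of $f_t - \tfrac{1}{2L}\|\bg_t\|^2$, direct inspection of that formula shows that the global infimum equals $\bar{f}$ and is attained at $\bar{\bx}$. Combined with the matching upper bound $\inf f \le \bar{f}$, which follows by applying the descent lemma to $f$ at $\bx_j$, this establishes the desired identity $f^* = f(\bar{\bx}) = \bar{f}$.
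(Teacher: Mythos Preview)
Your convex case is clean and in fact slicker than the paper's: augmenting with $(\bar{\bx},\mathbf{0},\bar f)$ and then noting that any convex interpolant must attain its global minimum at the forced stationary point sidesteps the paper's direct lower-bounding of the explicit formula $W(y)=\min_{\balpha\in\Delta_T}[\cdots]$ from~\cite{drori2017exact}. Your verification of the new interpolation inequalities is correct. One small imprecision: for the pair $(i,\text{new})$ in the nonconvex case the inequality does \emph{not} reduce to the original $(i,j)$ condition as you suggest; rather it follows straight from the minimality of $j$, since the extra $-\tfrac{L}{4}\|\cdot\|^2$ term only weakens the left-hand side.

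For the nonconvex case, however, your augmentation trick does no real work. You correctly note that a stationary point is not enough and then defer to ``direct inspection'' of the explicit interpolant --- but this is exactly the paper's argument, and the paper actually carries out that inspection. It builds the interpolant via the convex-shift device $\hat W(y)=Z(y)-\tfrac{L}{2}\|y\|^2$, where $Z$ is the convex interpolant (with constant $2L$) of the shifted data $\{(\bx_t,\bg_t+L\bx_t,f_t+\tfrac{L}{2}\|\bx_t\|^2)\}$, expands the resulting minimum-over-simplex expression, and lower-bounds it termwise using Jensen's inequality on the squared norm. Your proposal gestures at this step but does not execute it. Moreover, since the augmented triple satisfies $\bar f - \tfrac{1}{2L}\|\bar{\bg}\|^2=\bar f$, adding it to the data changes nothing in the formula-based bound, so in the nonconvex case your argument is the paper's argument minus the actual computation.
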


\begin{proof}
The convex case follows directly from Theorem~1 in~\cite{drori2017exact}. Indeed, taking $C\leftarrow\{0\}$ and
$\mathcal{T}\leftarrow \{(\bx_t, \bg_t, f_t)\}_{t\in [T]}$,
the primal interpolation function $\WTC$ (see \cite[Definition~2.1]{drori2017exact}) can be written as
\begin{equation}\label{eq:zdefconvex}
    W(y) :=  \min_{\balpha\in \Delta_{T}} \left[ \frac{L}{2} \|y - \sum_{t\in T} \alpha_t  (\bx_t-\frac{1}{L} \bg_t) \|^2 + \sum_{t\in T} \alpha_t (f_t - \frac{1}{2L} \| \bg_t \|^2)\right],
\end{equation}
where $\Delta_{T}$ is the $T$-dimensional unit simplex
\[
  \Delta_{T}:=  \{\balpha \in \reals^T : \sum_{t\in T} \alpha_t = 1, \ \alpha_t \geq 0, \ \forall t\in T \}.
\]
By the assumption that $\mathcal{T}$ satisfies~\eqref{E:convex_interpolation_conditions},
Theorem~1 in~\cite{drori2017exact} implies that $W$ is convex, its gradient is $L$-Lipschitz and that
$W(\bx_t) = f_t$, $\nabla W(\bx_t) = \bg_t$.
The lower bound on $W$ then immediately follows from~\eqref{eq:zdefconvex}, as
\[
    W(y) \geq \min_{\balpha\in \Delta_{T}} \left[\sum_{t\in T} \alpha_t (f_t - \frac{1}{2L} \| \bg_t \|^2)\right] = \min_{t\in [T]} (f_t -\frac{1}{2L} \| \bg_t\|^2) = f_j -\frac{1}{2L} \| \bg_j\|^2,
\]
and
\[
    W(\bx_j - \frac{1}{L} \bg_j) \leq f_j -\frac{1}{2L} \| \bg_j\|^2, 
\]
which follows by taking $\balpha= \be_j$ in~\eqref{eq:zdefconvex}.
Finally, combining these two bounds completes the proof for the convex case:
\[
    f_j -\frac{1}{2L} \| \bg_j\|^2 \leq \inf_y W(y) \leq  W(\bx_j - \frac{1}{L} \bg_j) = f_j -\frac{1}{2L} \| \bg_j\|^2.
\]

For the non-convex case, consider the function
\begin{align*}
    Z(y) :=  \min_{\balpha\in \Delta_{T}} &\left[ L \|y - \sum_{t\in T} \alpha_t  (\bx_t-\frac{1}{2L}( \bg_t + L \bx_t)) \|^2 + \sum_{t\in T} \alpha_t (f_t + \frac{L}{2}\|\bx_t\|^2 -\frac{1}{4L} \| \bg_t + L \bx_t\|^2)\right].
\end{align*}
It is straightforward to verify that this function is
the primal interpolation function $\WTC$ taking $C\leftarrow\{0\}$, $L\leftarrow 2L$, and
\[
    \mathcal{T}\leftarrow
    \{(\bx_t, \bg_t + L \bx_t, f_t + \frac{L}{2}\|\bx_t\|^2)\}_{t\in [T]}.
\]
As $\mathcal{T}$ satisfies~\eqref{E:convex_interpolation_conditions} with a Lipschitz constant $2L$,
by Theorem~1 in~\cite{drori2017exact} it follows that $Z$ is convex, has a $2L$-Lipschitz gradient and satisfies
\begin{align*}
    & Z(\bx_t) = f_t + \frac{L}{2}\|\bx_t\|^2, \\
    & \nabla Z(\bx_t) = \bg_t + L \bx_t.
\end{align*}
Now let $\hat W$ be defined by $\hat W(y) := Z(y) - \frac{L}{2} \|y\|^2$. Clearly, $\hat W$ has $L$-Lipschitz gradient (see e.g., \cite[Lemma~3.9]{taylor2015exact}), satisfies 
\begin{align*}
    & \hat W(\bx_t) = f_t, \\
    & \nabla \hat W(\bx_t) = \bg_t,
\end{align*}
and by basic algebra it is straightforward to show that
\begin{equation}\label{eq:zdef}
\begin{aligned}
    \hat W(y) = \min_{\balpha\in \Delta_{T}} & \left[ \frac{L}{2} \|y - \sum_{t\in [T]} \alpha_t  (\bx_t-\frac{1}{L} \bg_t) \|^2 - \frac{L}{4}\|\sum_{t\in [T]} \alpha_t  (\bx_t-\frac{1}{L} \bg_t) \|^2 
    \right. \\ & \left.
    + \sum_{t\in [T]} \alpha_t (f_t -\frac{1}{2L} \| \bg_t\|^2 + \frac{L}{4} \| \bx_t - \frac{1}{L} \bg_t\|^2)\right].
\end{aligned}
\end{equation}
We have
\begin{align*}
    \hat W(y)
    & \geq \min_{\balpha\in \Delta_{T}} - \frac{L}{4} \sum_{t\in [T]} \| \alpha_t ( \bx_t-\frac{1}{L} \bg_t )\|^2 + \sum_{t\in [T]} \alpha_t (f_t -\frac{1}{2L} \| \bg_t\|^2 + \frac{L}{4} \| \bx_t - \frac{1}{L} \bg_t\|^2) \\
	& \geq \min_{\balpha\in \Delta_{T}} - \frac{L}{4} \sum_{t\in [T]} \alpha_t  \|\bx_t-\frac{1}{L} \bg_t\|^2 + \sum_{t\in [T]} \alpha_t (f_t -\frac{1}{2L} \| \bg_t\|^2 + \frac{L}{4} \| \bx_t - \frac{1}{L} \bg_t\|^2) \\
	& = \min_{\balpha\in \Delta_{T}} \sum_{t\in [T]} \alpha_t (f_t -\frac{1}{2L} \| \bg_t\|^2) = \min_{t\in [T]} (f_t -\frac{1}{2L} \| \bg_t\|^2) = f_j -\frac{1}{2L} \| \bg_j\|^2,
\end{align*}
where the second inequality follows from the convexity of the squared norm.
Finally we conclude the proof by establishing an upper bound that matches the lower bound on $\hat W^*$. Indeed,
\begin{align*}
\hat W (\bx_j - \frac{1}{L} \bg_j)  \leq f_j -\frac{1}{2L} \| \bg_j\|^2,
\end{align*}
where the inequality follows, as in the convex case, by taking $\balpha= \be_j$ in~\eqref{eq:zdef}.
\end{proof}

\subsection{Proof of \lemref{L:detbound}}\label{S:lemma_proof}

Theorem~\ref{T:nonnegative_interpolation} will be the main tool used in the proof of \lemref{L:detbound} below.

\begin{proof}[Proof of \lemref{L:detbound}]
By Theorem~\ref{T:nonnegative_interpolation}, it is sufficient to show that there is a choice $\beta$ for the value of $\hat f(\by_1),\dots,\hat f(\by_m)$ with $\beta \geq \min_{i \in [n]} f(\bz_{i}) - \frac{1}{L} \|\bgamma\|^2$ such that the set
\[
     \{(\by_j, \bgamma, \beta)\}_{j\in [m]} \cup \{(\bz_{i}, \bgamma, f(\bz_i) \}_{i\in [n]},
\]
satisfies the interpolation conditions~\eqref{E:nonconvex_interpolation_conditions}. 
The lower bound on $\hat f$ will then immediately follow since
\begin{align*}
    && \hat f(\by_j) - \frac{1}{2L}\|\bgamma\|^2 = \beta - \frac{1}{2L} \|\gamma\|^2 \geq \min_{k\in[n]} f(\bz_k) - \frac{3}{2L} \|\bgamma\|^2, \quad \forall j \in [m], \\
\text{and}&&    \hat f(\bz_i) - \frac{1}{2L}\|\bgamma\|^2 \geq \min_{k\in[n]} f(\bz_k) - \frac{3}{2L} \|\bgamma\|^2, \quad \forall i \in [n].
\end{align*}

In order to establish that the interpolation conditions hold, first note that all the interpolation conditions involving two points from $\{\bz_i\}$ are naturally satisfied by the assumption that there exists some function with $L$-Lipschitz gradient (namely $f$) that interpolates $\{(\bz_i, \nabla f(\bz_i), f(\bz_i) \} = \{(\bz_i, \bgamma, f(\bz_i) \}$,
and further note that by the assumptions, it follows that $\langle \bgamma, \by_i - \by_j\rangle=0$, hence the interpolation conditions involving both points in $\by_i$ are also trivially satisfied. We conclude that we only need to consider~\eqref{E:nonconvex_interpolation_conditions} for cases where one of the points is $\bz_i$ and the other is $\by_j$, i.e., we are left the following set of inequalities:
\begin{align*}
    & - \frac{L}{4} \|\bz_{i} - \by_j \|^2 \leq f(\bz_i) - \beta - \langle \bgamma, \bz_i - \by_j \rangle, \quad i\in [n],\ j\in [m], \\
    & - \frac{L}{4} \|\by_j - \bz_{i} \|^2 \leq \beta - f(\bz_{i}) - \langle \bgamma, \by_j - \bz_{i}\rangle, \quad i\in [n],\ j\in [m].
\end{align*}
Clearly, these inequalities hold if and only if
\begin{align*}
    & \beta \in [\max_{i,j} \left( f(\bz_{i}) + \langle \bgamma, \by_j - \bz_{i}\rangle - \frac{L}{4} \|\by_j - \bz_{i} \|^2\right), \min_{i,j} \left( f(\bz_{i}) + \langle \bgamma, \by_j - \bz_{i}\rangle + \frac{L}{4} \|\bz_{i} - \by_j \|^2 \right) ].
\end{align*}
Now, this range is non-empty since it contains $f(\by_1)$ (recall that $f(\by_1)=\dots=f(\by_m)$, $\nabla f(\by_1)=\dots=\nabla f(\by_m) = \bgamma$, and that the interpolation conditions for the set $\{(\by_j, \nabla f(\by_j), f(\by_j)\} \cup \{(\bz_i, \nabla f(\bz_i), f(\bz_i) \}$ naturally hold), hence there exits some $i,j$ such that the choice
\[
    \hat \beta := f(\bz_{i}) + \langle \bgamma, \by_j - \bz_{i}\rangle + \frac{L}{4} \|\bz_{i} - \by_j \|^2
\]
is a feasible choice for $\beta$.
We get
\begin{align*}
    \hat \beta 
    & = f(\bz_{i}) + \frac{L}{4} \| \bz_{i} - \by_j - \frac{2}{L} \bgamma \|^2 - \frac{1}{L} \|\bgamma\|^2 \\
    & \geq \min_{k} f(\bz_{k}) - \frac{1}{L} \|\bgamma\|^2.
\end{align*}
which concludes the proof, as all interpolation conditions are satisfied, hence a function with the claimed properties exists.
\end{proof}

\subsection{Proof of \thmref{thm:aggregation_step}, adaptive step-size case}
Consider the general, adaptive step-size case:
\begin{align*}
    & \bx_{t+1} = \bx_t + \eta_{\bx_1,\dots,\bx_t} \cdot (\nabla f(\bx_t)+\bxi_t), \quad t\in [T-1], \\
    & \bxout = \sum_{t=1}^T \zeta^{(t)}_{\bx_1,\dots,\bx_T} \bx_i.
\end{align*}
Here, our goal is to show that constants $\eta_t$ and $\zeta_t$ from the proof of the fixed step-size case can be chosen in such a way that the method, when applied on $f_{\{\eta_t\},\{\zeta_t\}}$ constructed above, chooses step sizes and aggregation coefficients that are almost surely equal to the selected constants, i.e,
\[
    \eta_{\bx_1,\dots,\bx_t} \overset{\text{a.s.}}{=} \eta_t, \quad \zeta^{(t)}_{\bx_1,\dots,\bx_T} \overset{\text{a.s.}}{=} \zeta_t,
\]
and thus the proof for the fixed-step case can proceed without change.

We use the following procedure to select $\eta_t$ and $\zeta_t$.
We start by executing the first step of the algorithm on the initial point $\bx_1=0$ and $f$, where the constants $\{\eta_t\}$ and $\{\zeta_t\}$ are set to arbitrarily values. Note that 
\begin{enumerate*}
    \item the first-order information of $f$ at $\bx_1$ is independent of the choice for $\eta_t, \zeta_t$, and
    \item the norm of the noise vector $\bxi_1$ and its inner product with $\bx_1$ and $\nabla f(\bx_1)$ are independent of the specific direction chosen for the noise; therefore, by the assumption on the step size $\eta_{\bx_1}$, it is independent of the specific value for $\bxi_1$, i.e., it is almost surely a constant.
\end{enumerate*}
We denote this constant by $\eta_1$.

We continue by executing the second step of the algorithm on $f$, using the value of $\eta_1$ chosen above while keeping the constants $\eta_2,\dots,\eta_{T-1}$, $\{\zeta_t\}_{t\in[T]}$ set to arbitrary values. As in the first iteration,
\begin{enumerate*}
    \item the first-order information of $f$ at $\bx_2$ is independent of the specific choice for $\eta_2,\dots,\eta_{T-1}$, $\{\zeta_t\}_{t\in[T]}$ and
    \item the norm of the noise vector $\bxi_2$ and its inner product with $\bx_1, \bx_2$, $\nabla f(\bx_1), \nabla f(\bx_2)$ and $\bxi_1$ are independent of the specific direction chosen for the noise; therefore by the assumption on $\eta_{\bx_1,\bx_2}$ it is almost surely a constant.
\end{enumerate*}
As before, we denote the step size performed by the algorithm $\eta_2$.

Continuing in this fashion, we obtain a set of constants $\eta_1,\dots,\eta_{T-1}$ with the property that
when applying the method on $f=f_{\{\eta_t\},\{\zeta_t\}}$, then for any choice of aggregation coefficients $\{\zeta_t\}$ the step-sizes chosen by the method are almost surely equal to $\{\eta_t\}$.
Finally, executing the aggregation step, by the assumption on the aggregation function, the coefficients are almost surely constants, which we denote by $\zeta_1,\dots,\zeta_{T}$. To conclude, we have found a function $f=f_{\{\eta_t\},\{\zeta_t\}}$ such that the step sizes performed by the method on $f$ are almost surely $\eta_1,\dots,\eta_{T-1}$ and the aggregation coefficients chosen by the method are almost surely $\zeta_1,\dots,\zeta_{T}$,
hence the proof can continue as in the fixed-step case.

\subsection{Proof of Proposition \ref{prop:distance}}

We will utilize the 
following function:
\[
f(\bx) = \frac{1}{4\max\left\{1/L,\sum_{t=1}^{T-1}\eta_t\right\}}\cdot \langle \bx, \be_1\rangle^2
\]
and assume that the initialization $\bx_1$ is
\[
\bx_1 := 
\left(\sqrt{\Delta\cdot\max\left\{1/L,\sum_{t=1}^{T-1}\eta_t\right\}}~,~0,0,\ldots,0\right)~.
\]
It is easily verified that $f$ has $L$-Lipschitz 
gradient, and that 
$f(\bx_1)-\inf_{\bx} f(\bx)<\Delta$. Moreover, 
\begin{equation}\label{eq:norm_f_distance}
    \norm{\nabla 
	f(\bx)}=\frac{|\langle \bx, \be_1\rangle|}{2\max\left\{1/L,\sum_{t=1}^{T-1}\eta_t\right\}}.
\end{equation} 

Hereafter, for the sake of simplicity we 
drop the subscript indicating the coordinate number, and let $x_t$ denote the 
first coordinate of iterate $t$, and $\xi_t$ the first coordinate of the noise 
at iteration~$t$.

We now turn to show that when $d$ is large enough
\begin{equation}\label{eq:toshoww}
\min_{t\in [T]}|x_t|\geq 
\frac{2}{5}\sqrt{\Delta\max\left\{1/L,\sum_{t=1}^{T-1}\eta_t\right\}}
\end{equation}
holds with arbitrarily high probability, which together with~\eqref{eq:norm_f_distance} implies the desired result.

The dynamics of SGD on the first coordinate is as follows: we initially have \[
    x_1 = \sqrt{\Delta\cdot\max\{1/L,\sum_{t=1}^{T-1}\eta_t\}},
\]
and
\[
x_{t+1} = 
\left(1-\frac{\eta_t}{2\max\left\{1/L,\sum_{t=1}^{T-1}\eta_t\right\}}\right)x_t
-\eta_t
\xi_{t}.
\]
Unrolling this recurrence, we have for any $t$
\begin{equation}\label{eq:xtexpression}
\begin{aligned}
    x_t  & = \sqrt{\Delta\cdot\max\left\{1/L,\sum_{t=1}^{T-1}\eta_t\right\}} \cdot \prod_{j=1}^{t-1}\left(1-\frac{\eta_j}{2\max\left\{1/L,\sum_{t=1}^{T-1}\eta_t\right\}}\right) \\
    &\qquad - \sum_{j=1}^{t-1}\eta_j \xi_j  \prod_{i=j+1}^{t-1}\left(1-\frac{\eta_j}{2\max\left\{1/L,\sum_{t=1}^{T-1}\eta_t\right\}}\right),
\end{aligned}
\end{equation}
where we use the convention that $\prod_{i=a}^{b}c_i$ is always $1$ if $b<a$.
Since each $\xi_j$ is a zero-mean independent Gaussian, $x_t$ is also Gaussian with
\begin{align*}
\E [ x_t ] &= \sqrt{\Delta\cdot\max\left\{1/L,\sum_{t=1}^{T-1}\eta_t\right\}} \cdot \prod_{j=1}^{t-1}\left(1-\frac{\eta_j}{2\max\left\{1/L,\sum_{t=1}^{T-1}\eta_t\right\}}\right) \\
& \geq \sqrt{\Delta\cdot\max\left\{1/L,\sum_{t=1}^{T-1}\eta_t\right\}}\cdot 
\exp\left(\ln \frac{1}{2}\cdot \sum_{j=1}^{t-1}\frac{ \eta_j}{\max\left\{1/L,\sum_{t=1}^{T-1}\eta_t\right\}}\right) \\
&\geq
\frac{1}{2} \sqrt{\Delta\cdot\max\left\{1/L,\sum_{t=1}^{T-1}\eta_t\right\}},
\end{align*}
here we used the assumption $\eta_t \geq 0$ and the fact that $1-z/2\geq \exp(\ln \frac{1}{2} \cdot z)$ for all $z\in [0,1]$. In addition,
\begin{align*}
    \Var [x_t] 
    & = \sum_{j=1}^{t-1}\eta_j^2 \Var[\xi_j] \prod_{i=j+1}^{t-1}\left(1-\frac{\eta_j}{2\max\left\{1/L, \sum_{t=1}^{T-1}\eta_t\right\}}\right)^2 \\
    & \leq \sum_{j=1}^{t-1}\eta_j^2 \Var [\xi_j]  \leq \frac{\sigma^2 (T-1)}{L^2d},
\end{align*}
which follows since each $\xi_j$ is independent and with variance at most $\sigma^2/d$, and $0\leq \eta_j \leq 1/L$. Choosing 
\[
    d \geq d_0 := \frac{ \Phi^{-1}(1-\delta/T)^2 \sigma^2 (T-1)}{\left(\frac{1}{2} - \frac{2}{5} \right)^2 L^2 \Delta\cdot\max\left\{1/L,\sum_{t=1}^{T-1}\eta_t\right\}} = \Ocal(\log(T/\delta)\sigma^2T/(L^2\Delta)),
\]
where $\Phi^{-1}$ is the inverse CDF of the normal distribution, we get that for all $t$ with $\Var [x_t]>0$
\begin{align*}
    & \Pr \left(x_t \geq \frac{2}{5} \sqrt{\Delta\cdot\max\left\{1/L,\sum_{t=1}^{T-1}\eta_t\right\}}\right) \\
    & = \Pr \left(\frac{x_t - \E x_t}{\sqrt{\Var [x_t]}} \geq -\frac{\left( \frac{1}{2} - \frac{2}{5} \right) \sqrt{\Delta\cdot\max\left\{1/L,\sum_{t=1}^{T-1}\eta_t\right\}}}{ \sqrt{\sigma^2(T-1) / (L^2 d)}}\right) \\
    & \geq \Pr \left(\frac{x_t - \E x_t}{\sqrt{\Var [x_t]}} \geq -\frac{\left( \frac{1}{2} - \frac{2}{5} \right) \sqrt{\Delta\cdot\max\left\{1/L,\sum_{t=1}^{T-1}\eta_t\right\}}}{ \sqrt{\sigma^2(T-1) / (L^2 d_0)}}\right) = 1- \delta/T,
\end{align*}
and furthermore, the same bound holds almost surely for all $t$ with $\Var [x_t]=0$.
Finally, taking a union bound over $t$, we conclude
that this lower bound holds for all $x_t$ with probability $1-\delta$, which implies \eqref{eq:toshoww} as required.

\subsection{Proof of Proposition \ref{prop:noise}}

To prove the proposition, we will need the following Lemma, which formalizes the fact that the norm of high-dimensional Gaussian random variables tend to be concentrated around a fixed value:
\begin{lemma}\label{lem:gausscon}
Let $M,\gamma>0$ be fixed. For any $d$, let $\bx_d$ be a random variable normally distributed with $\bx_d \sim \Ncal(\bu, \frac{\gamma}{d}I_d)$, where $\bu$ is some vector in $\reals^d$ with $\|\bu\|^2=M$. Then for any $\epsilon\in (0,1)$,
\[
	\Pr\left(\left|\frac{\norm{\bx_d}^2}{M+\gamma}-1\right|\leq
	\epsilon\right)~\geq~1-4\exp\left(-\frac{d\epsilon^2}{24}\right).
\]
\end{lemma}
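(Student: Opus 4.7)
The plan is to reduce the problem to two standard concentration inequalities for Gaussian vectors by decomposing $\bx_d$ around its mean. Write $\bx_d = \bu + \bz$ where $\bz \sim \Ncal(\mathbf{0}, \frac{\gamma}{d} I_d)$, and expand
\[
\norm{\bx_d}^2 = \norm{\bu}^2 + 2\langle \bu, \bz\rangle + \norm{\bz}^2 = M + 2\langle \bu, \bz\rangle + \norm{\bz}^2.
\]
Since $\E\norm{\bz}^2 = \gamma$, the quantity $M+\gamma$ is exactly $\E\norm{\bx_d}^2$, and it remains to control the two stochastic deviations $2\langle \bu, \bz\rangle$ and $\norm{\bz}^2 - \gamma$.

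The linear term is a scalar Gaussian: $\langle \bu, \bz\rangle \sim \Ncal(0, M\gamma/d)$, so a standard one-dimensional Gaussian tail bound gives, for any $t>0$,
\[
\Pr\bigl(|2\langle \bu,\bz\rangle| \geq t(M+\gamma)\bigr) \leq 2\exp\!\left(-\frac{t^2 d (M+\gamma)^2}{8 M\gamma}\right) \leq 2\exp(-t^2 d/2),
\]
where the last step uses $(M+\gamma)^2 \geq 4 M\gamma$ (AM-GM). The quadratic term is a (rescaled) chi-squared random variable: $\frac{d}{\gamma}\norm{\bz}^2 \sim \chi^2_d$, and by the Laurent–Massart concentration inequality,
\[
\Pr\bigl(|\norm{\bz}^2 - \gamma| \geq t \gamma\bigr) \leq 2\exp(-d t^2/8) \qquad \text{for } t \in (0,1).
\]

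The next step is to combine these by splitting the allowed deviation $\epsilon(M+\gamma)$ between the linear and the quadratic part. Using $\gamma \leq M+\gamma$, the event $|\norm{\bz}^2 - \gamma| \leq \tfrac{\epsilon}{2}(M+\gamma)$ is implied by $|\norm{\bz}^2/\gamma - 1| \leq \epsilon/2$. Applying the two tail bounds with $t = \epsilon/2$ (and noting $\epsilon < 1$ so Laurent–Massart applies) and taking a union bound yields
\[
\Pr\bigl(|\norm{\bx_d}^2 - (M+\gamma)| > \epsilon(M+\gamma)\bigr) \leq 2\exp(-d\epsilon^2/c_1) + 2\exp(-d\epsilon^2/c_2)
\]
for absolute constants $c_1, c_2$. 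Dividing by $M+\gamma$ gives the desired statement, with the constant $24$ obtained by choosing the split so that both exponents are balanced (or simply by using the worse of the two constants after consolidating into a single $4\exp(-d\epsilon^2/24)$ bound).

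I do not anticipate any real obstacle: the entire argument is a textbook Gaussian/chi-squared concentration computation. The only mildly delicate point is book-keeping the constants to reach exactly $1/24$ in the exponent, which will come down to choosing the split of $\epsilon(M+\gamma)$ between the linear and quadratic contributions, and using $(M+\gamma)^2 \geq 4 M\gamma$ to absorb the dependence on $M$ and $\gamma$ in the linear part.
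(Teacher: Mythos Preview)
Your proposal is correct and follows essentially the same approach as the paper: decompose $\bx_d=\bu+\bz$, bound the linear term $2\langle\bu,\bz\rangle$ via a scalar Gaussian tail (using $(M+\gamma)^2\geq cM\gamma$ to absorb the parameter dependence), bound the quadratic term $\norm{\bz}^2-\gamma$ via chi-squared concentration, split $\epsilon$ in half, and take a union bound. The only cosmetic differences are that the paper uses the weaker inequality $(M+\gamma)^2\geq 2M\gamma$ and a chi-squared bound with constant $1/6$ (citing \cite{shalev2014understanding}), which directly yields the $1/24$ in the exponent without needing to rebalance the split.
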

\begin{proof}
	Consider $\bx_d$ for some fixed $d$. We can decompose it as 
	$\bu+\sqrt{\frac{\gamma}{d}}\bn$, where $\bn$ has a standard Gaussian distribution in $\reals^d$ (zero mean and covariance matrix 
	being the identity). Thus, 
	\begin{align}
	\frac{\norm{\bx_d}^2}{M+\gamma}-1 ~&=~ 
	\frac{\norm{\bu}^2+2\sqrt{\frac{\gamma}{d}}\bu^\top\bn+\frac{\gamma}{d}\norm{\bn}^2}
	{M+\gamma}-1~=~
	\frac{2\sqrt{\frac{\gamma}{d}}\bu^\top\bn+\gamma\left(\frac{1}{d}\norm{\bn}^2-1\right)}
	{M+\gamma}\notag\\
	&=~\frac{2\sqrt{\gamma/d}}{M+\gamma}\bu^\top\bn
	+\frac{\gamma}{M+\gamma}\left(\frac{1}{d}\norm{\bn}^2-1\right). \label{eq:twoterms}
	\end{align}
	The first term in the sum above is distributed as a Gaussian in $\reals$ with zero mean and variance 
	$\frac{4\gamma}{d(M+\gamma)^2}\norm{\bu}^2=\frac{4\gamma M}{d(M+\gamma)^2}\leq \frac{4\gamma M}{d\cdot 2\gamma M}=\frac{2}{d}$. By a standard Gaussian tail bound, it follows that the probability that it exceeds $\epsilon/2$ in absolute value is at most $2\exp(-d\epsilon^2/16)$. Similarly, for the second term, we have by a standard tail bound for Chi-squared random 
	variables (see for example \cite[Lemma~B.12]{shalev2014understanding}) that
	\begin{align*}
	& \Pr\left(\frac{\gamma}{M+\gamma}\left|\frac{1}{d}\norm{\bn}^2-1\right|\geq \frac{\epsilon}{2}\right) \\
	& \leq \Pr\left(\left|\frac{1}{d}\norm{\bn}^2-1\right|\geq \frac{\epsilon}{2}\right) \leq 2\exp(-d\epsilon^2/24).
	\end{align*}
	Combining the above with a union bound, it follows that \eqref{eq:twoterms} has absolute value more than $\epsilon$ with probability at most 
	\[
	2\exp(-d\epsilon^2/16)+2\exp(-d\epsilon^2/24)
	~\leq~ 4\exp(-d\epsilon^2/24)~.
	\]
\end{proof}

\begin{proof}[Proof of \propref{prop:noise}]
    We will utilize the function
    \[
    	f(\bx) = \frac{L}{2}\norm{\bx}^2,
    \]
    where $\bx_1$ is some vector such that $\norm{\bx_1}=\sqrt{\Delta/L}$. Using a derivation similar to the 
    one used in \eqref{eq:xtexpression}, we have
    \[
        \bx_{t+1}= \bx_t-\eta_t \cdot (L \bx_t + \bxi_t) = (1-L\eta_t) \bx_t - \eta_t \bxi_t,
    \]
    hence
    \begin{equation}\label{eq:xtexpression2}
    \bx_t = \prod_{j=1}^{t-1}\left(1-L\eta_j\right)\bx_1-
    \sum_{j=1}^{t-1}\eta_j 
    \prod_{i=j+1}^{t-1}\left(1-L\eta_i\right)\bxi_j.
    \end{equation}
    Since each $\bxi_j$ is an independent zero-mean Gaussian with covariance matrix 
    $\frac{\sigma^2}{d}I_d$, we get that $\bx_t$ has a Gaussian distribution with mean 
    $\prod_{j=1}^{t-1}\left(1-L\eta_j\right)\bx_1$ and covariance matrix
    $\frac{\gamma_t}{d}I_d$, where
    \[
    \gamma_t = \sigma^2 \sum_{j=1}^{t-1}\eta_j^2
    \prod_{i=j+1}^{t-1}\left(1-L\eta_i\right)^2.
    \]
    By \lemref{lem:gausscon}, taking $\epsilon=1/2$ and $d\geq d_0$ with
    \[
        d_0 := 96 \log \frac{4T}{\delta} = \Ocal(\log(T/\delta)),
    \]
    it follows that $\norm{\nabla f(\bx_t)}^2=\norm{L \bx_t}^2$ is at least
    \begin{equation}\label{eq:lowboundthis}
    \frac{L}{2}\cdot\left( \Delta\prod_{j=1}^{t-1}\left(1 - L\eta_j\right)^2
    + L \sigma^2 \sum_{j=1}^{t-1}\eta_j^2
    \prod_{i=j+1}^{t-1}\left(1-L\eta_i\right)^2\right)
    \end{equation}
    with probability at least $1-\delta/T$.
    Our goal now will be to lower bound~\eqref{eq:lowboundthis} under the conditions in the proposition. Plugging this lower bound and applying a union bound over all $t\in [T]$ will result in our proposition.
    \begin{itemize}
    	\item If $\eta_t=\eta$ and $\eta\in [0,1/L)$, we can lower bound \eqref{eq:lowboundthis} by 
    	\begin{align*}
    	& \frac{L}{2} \left( \Delta(1-L\eta)^{2(t-1)} + L \sigma^2 \sum_{j=1}^{t-1} \eta^2 (1-L\eta)^{2(t-1-j)}
    	\right) \\
    	& = \frac{L}{2} \left(\Delta(1 - L\eta)^{2(t-1)} + L \sigma^2 \eta^2\cdot \frac{1-(1-L\eta)^{2(t-1)}}{1-(1-L\eta)^2} \right) \\
    	&= \frac{L}{2} \left( \Delta(1 - L\eta)^{2(t-1)} + \frac{\eta \sigma^2}{2-L\eta}\left(1-(1-L\eta)^{2(t-1)}\right)\right).
    	\end{align*}
    	For any $t$, this is a convex combination of $\frac{L}{2} \Delta$ and 
    	$\frac{L}{2} \frac{\eta \sigma^2}{2-L\eta}$, hence is at least the minimum between them. 
    	\item If there exists some constant $c\geq 0$ such that $\eta_t \geq c/L$ for all $t$, we can lower bound \eqref{eq:lowboundthis} by $\frac{L^2}{2} \sigma^2 \eta_{t-1}^2\geq \frac{\sigma^2 c^2}{2}$ (i.e., accounting for the noise at the last iterate).
    	\item If $\eta_t=\frac{a}{L(b+t^\theta)}$ (where $a>0, b\geq 0,\theta\in (0,1/2)$), then it is easily verified that for a certain constant $\tau_{a,b,\theta}$ depending only on $a,b,\theta$,
    	\[
    	1\leq\frac{1}{L\eta_t}\leq \frac{t}{2}~~\text{for all}~~t\geq \tau_{a,b,\theta}.
    	\]
    	In that case, we can lower bound \eqref{eq:lowboundthis} by
    	\begin{align*}
    	    & \frac{L^2 \sigma^2}{2} \sum_{j=t-\lfloor 1/(L\eta_t)\rfloor}^{t-1}\eta_j^2\prod_{i=j+1}^{t-1}(1-L\eta_i)^{2} \\
    	    & \geq \frac{L^2 \sigma^2 \eta_t^2}{2}\cdot \left\lfloor\frac{1}{L\eta_t}\right\rfloor\left(1-L\eta_{\lfloor t/2\rfloor}\right)^{2\lfloor 1/(L\eta_t) \rfloor} \\
    		& \geq \frac{\sigma^2 L \eta_t}{4}\left(1-\frac{a}{b+\lfloor t/2\rfloor^{\theta}}\right)^{2\left\lfloor \frac{b+t^\theta}{a}\right\rfloor},
    	\end{align*}
    	which is at least $c_{a,b,\theta} \sigma^2 L \eta_t\geq c_{a,b,\theta} \sigma^2 L \eta_T$ if $t\geq \tau'_{a,b,\theta}$ (for some parameters $c_{a,b,\theta},\tau'_{a,b,\theta}$ depending on $a,b,\theta$). Moreover, if $t<\tau'_{a,b,\theta}$, then \eqref{eq:lowboundthis} is at least
    	\[
    	\frac{L^2 \sigma^2}{2} \eta_{t-1}^2 \geq \frac{L^2 \sigma^2}{2} \eta_{\tau'_{a,b,\theta}}^2 = \frac{\sigma^2}{2} \cdot \left( \frac{a}{b+(\tau'_{a,b,\theta})^\theta}\right)^2.
    	\]
    	Combining both cases, we get that \eqref{eq:lowboundthis} is at least  $c'_{a,b,\theta} \sigma^2 \cdot\min\{1,L \eta_T\}$, where $c'_{a,b,\theta}$ is again some constant dependent on $a,b,\theta$, implying the stated result.
    \end{itemize}
\end{proof}

\section{Upper Bounds for SGD}\label{sec:upperbounds}

In order to place our lower bounds in perspective, we state and prove a rather standard $\Ocal(\epsilon^{-4})$ complexity bound for SGD, which unlike the result discussed in the introduction, does not assume anything special about the Hessians or the noise, and is completely independent of the dimension.

We start the analysis with a technical lemma that we will use to derive bounds both in the stochastic and deterministic settings.
\begin{lemma}\label{L:nonconvex_upper}
Consider the Stochastic Gradient Descent
\[
    \bx_{t+1} = \bx_t - \eta_{t} \left(\nabla f(\bx_t) + \bxi_t\right), \quad t\in [T-1],
\]
where $0 < \eta_t < 1/L$, $f$ is a non-convex function with $L$-Lipschitz gradient, and $\bxi_t$ is a random noise with $\E(\bxi_t)=0$, $V(\bxi_t)=\sigma^2$. Then for any choice of $\kappa_t$, $t\in [T-1]$ such that $1-L\eta_t\leq \kappa_t \leq (1-L\eta_t)^{-1}$ we have
\[
    \min_{t \in [T]} \E \|\nabla f(\bx_t)\|^2 \leq \frac{4L(f(\bx_1)-f(\bx_*)) + \sum_{t=1}^{T-1} \frac{L^2 \eta_{t}^2(1-L \eta_{t}+\kappa_{t})}{1 - L\eta_{t}} \sigma^2}{3 (T-1) - \sum_{t=1}^{T-1} (1 - L \eta_t) (1 - L \eta_t + \kappa_t + \frac{1}{\kappa_{t}})},
\]
where $x_*$ is a stationary point with $f(\bx_*) \leq f(\bx_T)$.
\end{lemma}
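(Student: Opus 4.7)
The proof will follow the familiar two-step descent analysis for SGD on $L$-smooth objectives, but with a more flexible comparison between consecutive gradient norms that introduces the free parameters $\kappa_t$. The plan is to combine the standard per-step descent inequality with a Young's-type comparison of $\|\nabla f(\bx_{t+1})\|^2$ and $\|\nabla f(\bx_t)\|^2$, sum over $t$ so that the $f$ values telescope, and then reduce the resulting weighted sum of $\E\|\nabla f(\bx_t)\|^2$ terms to their minimum.

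First, from $L$-smoothness applied to the update $\bx_{t+1} = \bx_t - \eta_t(\nabla f(\bx_t)+\bxi_t)$, together with $\E[\bxi_t\mid\bx_t]=0$ and $\E[\|\bxi_t\|^2\mid\bx_t]=\sigma^2$, the standard descent computation yields
\[
\E[f(\bx_{t+1})] \leq \E[f(\bx_t)] - \eta_t\bigl(1-\tfrac{L\eta_t}{2}\bigr)\E\|\nabla f(\bx_t)\|^2 + \tfrac{L\eta_t^2\sigma^2}{2}.
\]
Second, expanding $\|\nabla f(\bx_{t+1})\|^2 = \|\nabla f(\bx_t) + (\nabla f(\bx_{t+1})-\nabla f(\bx_t))\|^2$ and applying Young's inequality with parameter $\kappa_t$ to the cross term, then bounding $\|\nabla f(\bx_{t+1})-\nabla f(\bx_t)\|^2 \leq L^2\eta_t^2\|\nabla f(\bx_t)+\bxi_t\|^2$ via $L$-Lipschitzness and the update rule, produces a scalar inequality relating $\E\|\nabla f(\bx_{t+1})\|^2$, $\E\|\nabla f(\bx_t)\|^2$, and $\sigma^2$ whose coefficients are built from $1-L\eta_t$, $\kappa_t$, and $1/\kappa_t$. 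The symmetric constraint $1-L\eta_t \leq \kappa_t \leq (1-L\eta_t)^{-1}$ (equivalently, both $\kappa_t$ and $1/\kappa_t$ lie in $[1-L\eta_t,(1-L\eta_t)^{-1}]$) is exactly what allows both the upper and lower Young's direction to be used without producing wrong signs.

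Next I would form a specific non-negative linear combination of these two per-step inequalities, chosen so that summing over $t=1,\dots,T-1$ (i) telescopes the $\E[f(\bx_t)]-\E[f(\bx_{t+1})]$ differences to $f(\bx_1)-\E[f(\bx_T)] \leq f(\bx_1)-f(\bx_*)$, and (ii) collects the gradient-norm contributions into a single weighted sum $\sum_t s_t\,\E\|\nabla f(\bx_t)\|^2$ with $s_t\geq 0$. Lower bounding that sum by $(\sum_t s_t)\cdot\min_t \E\|\nabla f(\bx_t)\|^2$ and rearranging then produces the ratio in the statement. Non-negativity of each $s_t$ is what makes the final reduction to the minimum legitimate, and the constraint on $\kappa_t$ is precisely what enforces it.

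The main obstacle is the bookkeeping needed to match the exact constants: the factor $4L$ on the deterministic term in the numerator, the noise coefficient $L^2\eta_t^2(1-L\eta_t+\kappa_t)/(1-L\eta_t)$, and in particular the denominator form $3(T-1)-\sum_t(1-L\eta_t)(1-L\eta_t+\kappa_t+1/\kappa_t)$ must all emerge from one coherent choice of combining weights. In practice I would work backward from the target denominator to pin down the weights, then verify that plugging them into the descent and Young's inequalities produces exactly the stated numerator. Once the weights are fixed the remaining manipulations are routine algebra, telescoping, and use of $\E[f(\bx_T)] \geq f(\bx_*)$.
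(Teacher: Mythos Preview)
Your outline follows the textbook SGD analysis (descent lemma plus a separate Young's-inequality comparison of consecutive gradient norms), but the paper proceeds quite differently and the difference matters for the stated constants. The paper does \emph{not} start from the standard descent lemma; it starts from the tight nonconvex interpolation inequality (\thmref{T:nonconvex_interpolation}, Taylor et al.):
\[
\frac{1}{2L}\|\nabla f(\bx_t)-\nabla f(\bx_{t+1})\|^2 - \frac{L}{4}\Bigl\|\bx_t-\bx_{t+1}-\tfrac{1}{L}\bigl(\nabla f(\bx_t)-\nabla f(\bx_{t+1})\bigr)\Bigr\|^2 \;\leq\; f(\bx_t)-f(\bx_{t+1})-\langle\nabla f(\bx_{t+1}),\bx_t-\bx_{t+1}\rangle,
\]
substitutes the SGD step, sums over $t$, then \emph{adds a specific term} $\tfrac14\sum_t\eta_t(1-L\eta_t+\kappa_t)\bigl(\tfrac{L\eta_t}{1-L\eta_t}\|\bxi_t\|^2+2\langle\nabla f(\bx_t),\bxi_t\rangle\bigr)$ to both sides. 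After rearrangement the left side becomes a weighted sum of $\|\nabla f(\bx_t)\|^2$ and $\|\nabla f(\bx_{t+1})\|^2$ plus a nonnegative square $\|\nabla f(\bx_t)-\tfrac{1}{\kappa_t}\nabla f(\bx_{t+1})-\tfrac{L\eta_t}{1-L\eta_t}\bxi_t\|^2$; the range $1-L\eta_t\le\kappa_t\le(1-L\eta_t)^{-1}$ is exactly what makes every weight nonnegative. Taking expectations kills the added term's mean and gives the bound. The paper notes the proof was \emph{discovered} via the Performance Estimation Problem, which explains the otherwise unmotivated added term and the completion-of-squares structure.

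The gap in your plan is that the standard descent lemma is strictly weaker than the interpolation inequality, and the exact coefficients in the lemma genuinely need the sharper starting point. A concrete witness: setting $\kappa_t=1$, $\sigma=0$ in the lemma yields \corref{C:gd_upper_bound}, whose denominator $\sum_t\eta_t(4-L\eta_t)$ is \emph{strictly larger} than the classical $\sum_t\eta_t(2-L\eta_t)$ one gets from the descent lemma alone (the paper explicitly flags this as an improvement over Nesterov's bound). Your ingredients do recover the $\kappa_t=1-L\eta_t$ endpoint, which collapses to Ghadimi--Lan (\thmref{T:upper_noncovex}) and is indeed a pure descent-lemma result, but they cannot produce the $\kappa_t=1$ improvement and hence not the general-$\kappa_t$ statement either. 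The missing piece is precisely the cross term $\langle\nabla f(\bx_{t+1}),\bx_t-\bx_{t+1}\rangle$ in the interpolation inequality (note the gradient is taken at $\bx_{t+1}$, not $\bx_t$); your Young's-inequality step on $\|\nabla f(\bx_{t+1})-\nabla f(\bx_t)\|$ does not reproduce it, and ``working backward from the denominator'' will not make the algebra balance with only the descent lemma linking $f$-values to gradients.
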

\begin{proof}
By \thmref{T:nonconvex_interpolation} we have
\begin{align*}
    & \frac{1}{2L} \|\nabla f(\bx_{t})-\nabla f(\bx_{t+1})\|^2 - \frac{L}{4} \|\bx_{t} - \bx_{t+1} - \frac{1}{L}(\nabla f(\bx_{t})-\nabla f(\bx_{t+1}))\|^2 \\&\quad\overset{\text{a.s.}}{\leq} f(\bx_{t}) - f(\bx_{t+1}) - \langle \nabla f(\bx_{t+1}), \bx_{t}-\bx_{t+1}\rangle, \quad t\in [T-1],
\end{align*}
which by the definition of $\bx_{t+1}$ becomes
\begin{align*}
    & \frac{1}{2L} \|\nabla f(\bx_{t})-\nabla f(\bx_{t+1})\|^2 - \frac{1}{4L} \|L \eta_{t} \bxi_t - (1-L \eta_{t})\nabla f(\bx_{t}) + \nabla f(\bx_{t+1})\|^2 \\&\quad \overset{\text{a.s.}}{\leq} f(\bx_{t}) - f(\bx_{t+1}) - \langle \nabla f(\bx_{t+1}), \eta_{t} \left(\nabla f(\bx_t) + \bxi_t\right)\rangle, \quad t\in [T-1],
\end{align*}
Adding up the inequality above for all $t\in [T-1]$ brings us to
\begin{align*}
    & \frac{1}{2L} \sum_{t=1}^{T-1} \|\nabla f(\bx_{t})-\nabla f(\bx_{t+1})\|^2 - \frac{1}{4L} \sum_{t=1}^{T-1} \|L \eta_{t} \bxi_t - (1-L \eta_{t})\nabla f(\bx_{t}) + \nabla f(\bx_{t+1})\|^2 \\&\quad+ \sum_{t=1}^{T-1} \eta_{t} \langle \nabla f(\bx_{t+1}), \nabla f(\bx_t) + \bxi_t \rangle \overset{\text{a.s.}}{\leq} f(\bx_1) - f(\bx_{T}),
\end{align*}
which, after adding
\[
\frac{1}{4} \sum_{t=1}^{T-1} \eta_{t} (1-L \eta_{t}+\kappa_{t}) \left(\frac{L \eta_{t}}{1-L \eta_{t}} \|\bxi_t\|^2 + 2 \langle \nabla f(\bx_t), \bxi_t\rangle \right)
\]
to both sides and rearranging the terms, brings us to
\begin{align*}
    & \frac{1}{4L} \sum_{t=1}^{T-1} \left( 2 - (1-L \eta_t) (1 - L \eta_t + \kappa_t) \right) \|\nabla f(\bx_t)\|^2 + \frac{1}{4L} \sum_{t=1}^{T-1} \left(1 - \frac{1-L \eta_{t}}{\kappa_{t}}\right) \|\nabla f(\bx_{t+1})\|^2 \\&\quad + \frac{1}{4L} \sum_{t=0}^{T-1}(1 - L \eta_t)\kappa_t \left\| \nabla f(\bx_t) - \frac{1}{\kappa_t} \nabla f(\bx_{t+1}) -  \frac{L \eta_t}{1-L \eta_t}\bxi_t \right\|^2 \\&\quad \overset{\text{a.s.}}{\leq} f(\bx_1) - f(\bx_{T}) + \frac{1}{4} \sum_{t=0}^{T-1} \eta_{t} (1-L \eta_{t}+\kappa_{t}) \left(\frac{L \eta_{t}}{1-L \eta_{t}} \|\bxi_t\|^2 + 2 \langle \nabla f(\bx_t), \bxi_t\rangle \right)
\end{align*}
Finally, taking the expected value of both side, and noting that $\E \|\bxi_t\|=\sigma$, $\E \langle \nabla f(\bx_t), \bxi_t\rangle=0$, and $\E f(\bx_T)\geq f(\bx_*)$, we reach
\begin{align*}
    & \frac{1}{4L} \left(3 (T-1) - \sum_{t=1}^{T-1}  (1-L \eta_t) (1 - L \eta_t + \kappa_t) - \sum_{t=1}^{T-1} \frac{1-L \eta_t}{\kappa_t} \right)\min_{t\in [T]} \E \|\nabla f(\bx_t)\|^2 \\&\quad \leq f(\bx_1) - f(\bx_*) + \frac{1}{4} \sum_{t=1}^{T-1} \eta_{t} (1-L \eta_{t}+\kappa_{t}) \frac{L \eta_{t}}{1-L \eta_{t}} \sigma^2,
\end{align*}
concluding the proof.
\end{proof}

An explicit optimal expression for $\kappa_i$ appears to be complex in the general case, however, for two important cases a good approximation can be obtained. First, when $\sigma$ is large, the term in the numerator dominates the expression, thus the optimal value for $\kappa_t$ approaches $1-L \eta_t$ as $\sigma\rightarrow\infty$, recovering the following result by Ghadimi and Lan:
\begin{theorem}[{\cite[Theorem~2.1]{ghadimi2013stochastic}}]\label{T:upper_noncovex}
Consider the fixed-step Stochastic Gradient Descent
\[
    \bx_{t+1} = \bx_t - \eta_t \left(\nabla f(\bx_t) + \bxi_t\right), \quad t \in [T-1],
\]
where $f$ is a nonconvex function with $L$-Lipschitz gradient, $\bxi_t$ is a random noise with $\E(\bxi_t)=0$, $V(\bxi_t)=\sigma^2$ and $0 < L \eta_t < 1$. Then
\begin{equation}\label{eq:sgp_upper_bound}
    \min_{t \in [T]} \|\nabla f(\bx_t)\|^2 \leq \frac{2(f(\bx_1)-f(\bx_*)) + L \sum_{t=1}^{T-1} \eta_t^2 \sigma^2}{\sum_{t=1}^{T-1} \eta_t (2 - L \eta_t)}.
\end{equation}
where $x_*$ is a stationary point with $f(\bx_*) \leq f(\bx_T)$.
\end{theorem}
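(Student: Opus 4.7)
The plan is to derive \thmref{T:upper_noncovex} as a direct specialization of \lemref{L:nonconvex_upper} by choosing the free parameters $\kappa_t$ appropriately. The hard analytical work (the per-step inequality from the nonconvex interpolation theorem, the telescoping sum, and the handling of the noise in expectation) is already packaged inside the lemma, so what remains is essentially an algebraic substitution combined with a feasibility check on $\kappa_t$.

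Specifically I would take $\kappa_t := 1 - L\eta_t$, i.e., the lower endpoint of the admissible interval $[1-L\eta_t,\,(1-L\eta_t)^{-1}]$. Feasibility holds because $0 < L\eta_t < 1$ implies $1 - L\eta_t \in (0,1)$, and hence $1-L\eta_t \leq 1 \leq (1-L\eta_t)^{-1}$. Plugging into the numerator of the lemma's bound, the ratio $(1 - L\eta_t + \kappa_t)/(1 - L\eta_t) = 2(1-L\eta_t)/(1-L\eta_t)$ collapses to $2$, so the noise contribution becomes $2L^2\sigma^2 \sum_{t=1}^{T-1} \eta_t^2$, and the full numerator is $4L(f(\bx_1) - f(\bx_*)) + 2L^2\sigma^2 \sum_{t=1}^{T-1} \eta_t^2$.

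For the denominator I would expand, using $\kappa_t = 1 - L\eta_t$,
\[
(1 - L\eta_t)\bigl(1 - L\eta_t + \kappa_t + 1/\kappa_t\bigr) \;=\; 2(1 - L\eta_t)^2 + 1,
\]
so summing over $t$ and subtracting from $3(T-1)$ yields
\[
2(T-1) - 2\sum_{t=1}^{T-1} (1 - L\eta_t)^2 \;=\; 2\sum_{t=1}^{T-1} L\eta_t(2 - L\eta_t).
\]
Dividing numerator and denominator by the common factor $2L$ produces exactly the bound claimed in the theorem (noting that the theorem's LHS should be read as $\min_t \E\|\nabla f(\bx_t)\|^2$ in line with the lemma).

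There is no real obstacle; this is honest algebra. The only conceptual point worth flagging is the choice of $\kappa_t$: in the regime where the noise term dominates the numerator (large $\sigma$), we want to suppress the coefficient $(1-L\eta_t + \kappa_t)/(1-L\eta_t)$ multiplying $\sigma^2$ without blowing up the $1/\kappa_t$ contribution in the denominator. Pushing $\kappa_t$ to its lower boundary $1-L\eta_t$ achieves this cleanly and, as noted in the discussion preceding the theorem, is asymptotically optimal as $\sigma \to \infty$, which is why this specific substitution recovers the Ghadimi--Lan bound.
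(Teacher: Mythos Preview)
Your proposal is correct and takes exactly the same approach as the paper: the paper's proof is the single sentence ``The result follows directly from \lemref{L:nonconvex_upper}, taking $\kappa_t = 1 - L\eta_t$,'' and your algebra verifying the substitution is accurate.
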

\begin{proof}
The result follows directly from \lemref{L:nonconvex_upper}, taking $\kappa_t = 1-L \eta_t$.
\end{proof}

A second case where a simple expression for $\kappa_t$ can be easily attained is when $\sigma=0$, i.e., in the deterministic case. Here an optimal choice for $\kappa$ is $\kappa_i=1$, giving the following result which appears to be a new and slightly improved version of the classical result by Nesterov~\cite{Book:Nesterov}, eq.~(1.2.15):
\begin{corollary}\label{C:gd_upper_bound}
Consider the fixed-step Gradient Descent
\[
    \bx_{t+1} = \bx_t - \eta_t \nabla f(\bx_t), \quad t \in [T-1],
\]
where $f$ is a nonconvex function with $L$-Lipschitz gradient and $0 < \eta_t < 1/L$. Then
\[
    \min_{t \in [T]} \|\nabla f(\bx_t)\|^2 \leq \frac{4 (f(\bx_1)-f(\bx_*))}{\sum_{t=1}^{T-1} \eta_t(4-L \eta_t)},
\]
where $x_*$ is a stationary point with $f(\bx_*) \leq f(\bx_T)$.
\end{corollary}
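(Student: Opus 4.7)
The corollary is an almost immediate specialization of \lemref{L:nonconvex_upper} to the noiseless case, so the plan is simply to instantiate that lemma with the appropriate parameters. First, I would note that the gradient descent recursion $\bx_{t+1}=\bx_t-\eta_t\nabla f(\bx_t)$ is the SGD recursion of \lemref{L:nonconvex_upper} with $\bxi_t\equiv\mathbf{0}$, which corresponds to $\sigma=0$. Substituting $\sigma=0$ into the lemma's conclusion kills the noise term in the numerator, leaving
\[
\min_{t\in [T]}\|\nabla f(\bx_t)\|^2 \leq \frac{4L(f(\bx_1)-f(\bx_*))}{3(T-1) - \sum_{t=1}^{T-1}(1-L\eta_t)\bigl(1-L\eta_t+\kappa_t+\tfrac{1}{\kappa_t}\bigr)},
\]
valid for any $\kappa_t$ in the admissible range $[1-L\eta_t,(1-L\eta_t)^{-1}]$.

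Next, I would optimize over the free parameters $\kappa_t$. Since the numerator no longer depends on $\kappa_t$, the bound is minimized by maximizing the denominator, equivalently by minimizing $\kappa_t+1/\kappa_t$ for each $t$ subject to the admissibility constraint. The function $\kappa\mapsto\kappa+1/\kappa$ on $(0,\infty)$ achieves its unconstrained minimum value of $2$ at $\kappa=1$, and the assumption $0<\eta_t<1/L$ implies $0<1-L\eta_t<1<(1-L\eta_t)^{-1}$, so $\kappa_t=1$ lies in the admissible interval. Therefore this is the optimal choice.

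Finally, I would carry out the resulting algebraic simplification: with $\kappa_t=1$,
\[
(1-L\eta_t)\bigl(1-L\eta_t+\kappa_t+\tfrac{1}{\kappa_t}\bigr) = (1-L\eta_t)(3-L\eta_t) = 3 - 4L\eta_t + L^2\eta_t^2,
\]
so the denominator becomes
\[
3(T-1) - \sum_{t=1}^{T-1}(3-4L\eta_t+L^2\eta_t^2) = L\sum_{t=1}^{T-1}\eta_t(4-L\eta_t),
\]
and dividing the numerator $4L(f(\bx_1)-f(\bx_*))$ by this denominator cancels the factor of $L$ and yields exactly the claimed bound. There is essentially no obstacle in this argument: all the work is contained in \lemref{L:nonconvex_upper}, and the only new ingredient is the elementary observation that $\kappa_t=1$ is both feasible and pointwise optimal when there is no noise.
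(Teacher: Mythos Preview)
Your proposal is correct and follows exactly the paper's approach: the paper obtains the corollary from \lemref{L:nonconvex_upper} by setting $\sigma=0$ and choosing $\kappa_t=1$, and you carry out precisely this specialization together with the routine algebraic simplification and the observation that $\kappa_t=1$ is both admissible and optimal when the numerator is independent of $\kappa_t$.
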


\begin{remark}
The discovery of the proof of \lemref{L:nonconvex_upper} was guided by numerically solving an optimization problem called the Performance Estimation Problem, whose solution captures the worst-case performance of the SGD method. This technique was first introduced in~\cite{Article:Drori} and was later shown in~\cite{taylor2015smooth} to achieve tight bounds for a wide range of methods in the deterministic case. This, in conjunction with the nearly matching lower bound established in \thmref{thm:aggregation_step}, motivates us to raise the conjecture that \lemref{L:nonconvex_upper} gives a tight bound (including the constant) in the stochastic case.
\end{remark}

\end{document}